\newcolumntype{C}[1]{>{\centering}p{#1}}
\newtheorem{theorem}{Theorem}
\begin{document}
\title{Accelerating AIGC Services with Latent Action Diffusion Scheduling in Edge Networks}

\author{Changfu~Xu,~\IEEEmembership{Student Member,~IEEE,}
Jianxiong~Guo,~\IEEEmembership{Member,~IEEE,}
Wanyu~Lin,~\IEEEmembership{Member,~IEEE,}\\
Haodong~Zou,~\IEEEmembership{Student Member,~IEEE,}
Wentao~Fan,~\IEEEmembership{Senior~Member,~IEEE,}
Tian~Wang,~\IEEEmembership{Senior~Member,~IEEE,}\\
Xiaowen~Chu,~\IEEEmembership{Fellow,~IEEE,}
and Jiannong~Cao,~\IEEEmembership{Fellow,~IEEE}

\thanks{This work was supported in part by grants from the National Natural Science Foundation of China (NSFC) (62172046, 62372047, 62202055), the Beijing Natural Science Foundation (No. 4232028), the Natural Science Foundation of Guangdong Province (2024A1515011323), Zhuhai Basic and Applied Basic Research Foundation (2220004002619), the Joint Project of Production, Teaching and Research of Zhuhai (2220004002686, 2320004002812), Science and Technology Projects of Social Development in Zhuhai (2320004000213), the Supplemental Funds for Major Scientific Research Projects of Beijing Normal University, Zhuhai (ZHPT2023002), the Fundamental Research Funds for the Central Universities, Higher Education Research Topics of Guangdong Association of Higher Education in the 14th Five-Year Plan under 24GYB207, Beijing Normal University Education Reform Project under jx2024139. (\textit{Corresponding Author: Tian Wang.})}
\thanks{Changfu Xu and Haodong Zou are with the BNU-HKBU United International College, Zhuhai 519087, China, and the Hong Kong Baptist University, Hong Kong. (E-mail: \{changfuxu, haodongzou\}@uic.edu.cn)}
\thanks{Jianxiong Guo is with the Institute of Artificial Intelligence and Future Networks, Beijing Normal University, Zhuhai 519087, China, and the Guangdong Key Lab of AI and Multi-Modal Data Processing, BNU-HKBU United International College, Zhuhai 519087, China. (E-mail: jianxiongguo@bnu.edu.cn)}
\thanks{Wentao Fan is with the BNU-HKBU United International College, Zhuhai 519087, China. (E-mail: wentaofan@uic.edu.cn)}
\thanks{Tian Wang is with the Institute of Artificial Intelligence and Future Networks, Beijing Normal University, Zhuhai 519087, China. (E-mail: tianwang@bnu.edu.cn)}
\thanks{Xiaowen Chu is with the Hong Kong University of Science and Technology (Guangzhou), Guangzhou, China. (E-mail: xwchu@hkust-gz.edu.cn)}
\thanks{Wanyu Lin and Jiannong Cao are with the Department of Computing, Hong Kong Polytechnic University, Hong Kong. (E-mail: \{wanylin, csjcao@comp.polyu.edu.hk)}
}
\maketitle

\begin{abstract}
Artificial Intelligence Generated Content (AIGC) has gained significant popularity for creating diverse content. Current AIGC models primarily focus on content quality within a centralized framework, resulting in a high service delay and negative user experiences. However, not only does the workload of an AIGC task depend on the AIGC model's complexity rather than the amount of data, but the large model and its multi-layer encoder structure also result in a huge demand for computational and memory resources. These unique characteristics pose new challenges in its modeling, deployment, and scheduling at edge networks. Thus, we model an offloading problem among edges for providing real AIGC services and propose \textbf{LAD-TS}, a novel Latent Action Diffusion-based Task Scheduling method that orchestrates multiple edge servers for expedited AIGC services. The LAD-TS generates a near-optimal offloading decision by leveraging the diffusion model's conditional generation capability and the reinforcement learning's environment interaction ability, thereby minimizing the service delays under multiple resource constraints. Meanwhile, a latent action diffusion strategy is designed to guide decision generation by utilizing historical action probability, enabling rapid achievement of near-optimal decisions. Furthermore, we develop \textbf{DEdgeAI}, a prototype edge system with a refined AIGC model deployment to implement and evaluate our LAD-TS method. DEdgeAI provides a real AIGC service for users, demonstrating up to $29.18\%$ shorter service delays than the current five representative AIGC platforms. We release our open-source code at \url{https://github.com/ChangfuXu/DEdgeAI/}.
\end{abstract}

\begin{IEEEkeywords}
Edge computing, Diffusion scheduling, Accelerating AIGC service, Reinforcement learning
\end{IEEEkeywords}

\section{Introduction}
The technique of Artificial Intelligence Generated Content (AIGC) has garnered substantial attention and demonstrated considerable success in today’s industrial applications \cite{cao2023comprehensive}. The main aim of AIGC is to autonomously generate the corresponding human-aware content, such as text, images, and video, according to the instructions given by users. Due to the superior capacity of these content creations, the development of AIGC has become a prominent research hotspot \cite{wang2024next}.

In recent years, with the rise of ChatGPT \cite{ouyang2022training}, various AIGC models (such as Midjourney \cite{midjourney} and Sora \cite{liu2024sora}) are constantly being proposed \cite{lin2024blockchain}. For instance, by using Deep Reinforcement Learning (DRL) from human feedback, ChatGPT would generate the most appropriate response for a given instruction, improving the reliability and accuracy of its model \cite{ouyang2022training}. However, current AIGC models focus on the quality performance of content generation and are predominantly operated by the Cloud Server (CS) within a centralized service framework, as shown in Fig.~\ref{Fig1}. This makes a high service response time for AIGC services, leading to a bad Quality of Experience (QoE) \cite{wang2024next}. For example, users generally experience $40-80$ seconds wait for image generation on platforms such as Midjourney \cite{midjourney} and Hugging Face \cite{huggingface}. Therefore, there is a pressing need for the QoE improvement of AIGC applications.

\begin{figure}[!t]
\centering
\includegraphics[width=0.99\linewidth]{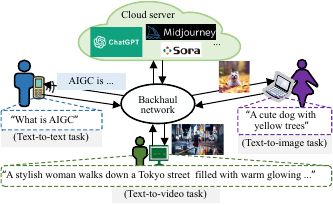}
\caption{An illustration of existing AIGC applications. Users send their prompts to the cloud server that has deployed the corresponding AIGC models by backhaul network. Then, the corresponding contents (e.g., image) are generated according to the prompts and are sent back to users by the server.}
\label{Fig1}
\end{figure}

Fortunately, edge computing has emerged as a promising technology to provide lower service delay at Edge Servers (ESs) compared to traditional centralized cloud computing \cite{zhang2024incentive, ye2024asteroid, wang2024invar, li2024online, han2022edgetuner, wang2023edge}. For instance, Xu \textit{et al.} \cite{xu2024dynamic} propose a dynamic parallel multi-ES selection and workload allocation approach in collaborative edge computing systems. In these methods, some task workloads are locally processed at the ES, and some are offloaded to other ESs or the CS for processing \cite{fan2024collaborative, he2024online, khochare2023improved}. As a result, the QoE for task requests in both ES and CS are improved. This motivates us to regard ESs as a resource pool to collaboratively process AIGC requests, thus improving AIGC applications' QoE.

However, we observe some challenges to realizing AIGC services in edge networks. \textbf{First Challenge:} Different from traditional tasks, the AIGC task workload is mainly determined by the self-complexity of the AIGC model rather than the size of task data. Moreover, different AIGC models have different network structures. All these pose a challenge in modeling AIGC tasks. \textbf{Second Challenge:} AIGC tasks are also computation-intensive due to the large parameters of AIGC model. Thus, when many AIGC tasks simultaneously arrive in the edge system, we have to schedule these tasks to other available ESs for parallel processing due to limited computing resources on local ES, ensuring a good QoE. However, the processing of task scheduling is an online decision-making problem, which makes it computationally infeasible to attain an optimal solution within polynomial time. In addition, although many DRL-based scheduling solutions have recently been proposed to reduce service delay, they have a low balance capability between exploration and exploitation. \textbf{Third Challenge:} AIGC services are usually large models and use several text encoders. For example, a classic AIGC model DALL$\cdot$E \cite{dalle} has 12 billion parameters. Another well-known AIGC model Stable Diffusion 3 (SD3) \cite{ho2020denoising} has three pretrained text encoders——OpenCLIP-ViT/G, CLIP-ViT/L, and T5xxl——to encode text representations and an improved autoencoding model to encode image tokens. This makes the deployment of AIGC services require a large amount of memory. However, edge devices (e.g., Nvidia Jetson) have limited memory resource, which leads to a significant challenge in the real deployment of AIGC services on edge networks.

To address the above challenges, we propose \textbf{LAD-TS}, a novel Latent Action Diffusion-based Task Scheduling method that can orchestrate multiple ESs for expedited AIGC services. Furthermore, we develop \textbf{DEdgeAI}, a prototype edge system to implement and evaluate our LAD-TS method for real AIGC services. LAD-TS and DEdgeAI's contributions go beyond leveraging distributed ESs to accelerate AIGC services and address the above challenges at three levels. Specifically, \textit{in response to the first challenge}, by analyzing the feature of the AIGC model, we model the offloading problem among edges for providing real AIGC services and formulate it as an online Integer Non-Linear Programming (INLP) problem with NP-hardness. The problem objective is designed to minimize the average delay of all task offloading in the system, ensuring the QoE of AIGC applications in edge networks. \textit{For the second challenge}, we observe that since the diffusion model has guidance with exceptional conditions (e.g., text), the integration of the diffusion model and DRL techniques represents a complement to the balance between exploration and exploitation, thus broadening the effectiveness of decision optimization in dynamic edge environments. Thus, we propose the LAD-TS method to generate the approximate optimal offloading decisions by utilizing the diffusion model's condition generation capability and the DRL's environment interaction ability. As a result, LAD-TS can minimize service delays under multiple resource constraints. Meanwhile, by leveraging historical action probability instead of random Gaussian noise in the diffusion processing, we also design a latent action diffusion strategy to guide decision generation, which enables fast achievement of near-optimal decisions. Moreover, we provide a theoretical analysis to show the LAD-TS's probability derivation. \textit{To address the third challenge}, we develop the DEdgeAI to deploy a refined SD3 medium model, named \textbf{reSD3-m}, by removing T5xxl encoder. Furthermore, we implement the LAD-TS method in the DEdgeAI system to provide a real AIGC service at the edges. Finally, extensive evaluations and test-bed results show the effectiveness of our LAD-TS method and DEdgeAI system while reducing the memory occupation on the ESs significantly. This paper makes the following main contributions:
\begin{itemize}
    \item \textbf{Modeling:} Unlike the existing task scheduling modeling in edge computing, we model the offloading problem among edges for real AIGC services and formulate it as an online INLP problem to minimize the average delay of all task offloading in the system. Moreover, we prove the NP-hardness of the offline counterpart of the problem.
    \item \textbf{Method:} We propose a novel LAD-TS method that utilizes multiple ESs for expedited AIGC services. Meanwhile, we optimize the task scheduling policy by designing a latent action diffusion strategy, achieving near-optimal decisions quickly. Furthermore, we implement our LAD-TS method as an online distributed algorithm with corresponding theoretical analysis, showing the LAD-TS's probability derivation. Extensive simulation evaluations reveal that our LAD-TS method significantly reduces the service delay by $8.58\%$ to $33.67\%$ and the training episode duration by at least $60\%$ compared to the state-of-the-art methods.
    \item \textbf{System:} We also develop the DEdgeAI system with the reSD3-m deployment on distributed edge devices to implement and evaluate our LAD-TS method. Test-bed results demonstrate that DEdgeAI achieves a service delay improvement of at least $29.18\%$ compared to five representative AIGC platforms and a memory occupation reduction by about 60\% compared to the original SD3 medium model. 
\end{itemize}

The remainder of this paper is organized as follows. Section \ref{sec2} introduces the related work. Section \ref{sec3} presents the system model and problem formulation. Section \ref{sec4} and Section \ref{sec5} give the specific details and effectiveness evaluation of our method, respectively. The test-bed results and conclusion are shown in Section \ref{sec6} and Section \ref{sec7}, respectively.

\section{Related Work}\label{sec2}
In this section, we briefly review the related work of cloud-enabled AIGC, edge-enabled AIGC, and diffusion scheduling.

\subsection{Cloud-enabled AIGC}
A large amount of research currently focuses on cloud-enabled AIGC with a centralized framework, where AIGC services are primarily deployed on the CS for clients' usage \cite{du2023ai, midjourney} (see in Fig. \ref{Fig1}). For instance, in the ChatGPT \cite{van2023chatgpt} application, all users send their prompts to the CS for processing in a centralized way. Similarly, other AIGC applications, such as SD3 models \cite{sd3medium} and VideoGPT \cite{maaz2023video}, for generating images also use the resource in the CS to process all user requests. These applications have been widely studied due to their superior generation capabilities. However, they just adopt a centralized platform that has limited network bandwidths. Additionally, they also demand a huge computing capacity for the CS. Consequently, the response time for achieving processing results is usually high due to the long distance between users and the CS, especially for many user requests. 

\begin{table*}[!t]
    \caption{Comparison of ours and current representative methods}
    \centering
    \begin{tabular}{m{2.0cm}<{\centering}m{2.5cm}<{\centering}m{2.5cm}<{\centering}m{2.5cm}<{\centering}m{2.8cm}<{\centering}m{2.5cm}<{\centering}}
    \toprule
    Method &Cloud-enabled AIGC &Edge-enabled AIGC & Diffusion scheduling &Latent action diffusion &Delay optimization\\
    \midrule
    Midjourney \cite{midjourney} &\checkmark &\ding{53} &\ding{53} &\ding{53} &\ding{53}\\
    Xu \textit{et al.} \cite{xu2024enhance} &\ding{53}  &\checkmark &\ding{53} &\ding{53} &\checkmark\\
    Du \textit{et al.} \cite{du2024diffusion} &\ding{53}  &\checkmark &\checkmark &\ding{53} &\ding{53}\\
    Ours &\ding{53} &\checkmark &\checkmark &\checkmark &\checkmark\\
    \bottomrule
    \end{tabular}
    \label{table1}
\end{table*}

\subsection{Edge-enabled AIGC}
To reduce the service delay in cloud-enabled AIGC applications, the edge-enabled AIGC methods \cite{wang2024next, xu2023sparks} have been proposed to provide AIGC services at the network edge. By considering the rigid assumptions of traditional heuristic algorithms, current edge-enabled AIGC methods use DRL (e.g., \cite{mnih2015human, haarnoja2018soft}) techniques to realize task scheduling as the DRL can optimize offloading decisions through interaction with the environment \cite{tang2023multi, tang2022deep}. For example, Xu \textit{et al.} \cite{xu2024enhance} propose an adaptive multi-server collaboration approach for edge-enabled AIGC based on DRL, thereby enhancing AIGC efficiency. These DRL-based methods can efficiently adapt to unforeseen changes in task characteristics due to their dynamic learning framework, making them highly suitable for scheduling challenges in real-time systems. However, DRL-based methods have a low balance capability between exploration and exploitation, which often results in suboptimal policies and diminished system QoS. Additionally, they also have an issue of low sample efficiency since they typically require extensive interaction with the environment, leading to high computational costs and time consumption \cite{luong2019applications}.

\subsection{Diffusion Scheduling}
Diffusion models have recently gained popularity in machine learning, particularly for applications such as image and video generation and molecular design \cite{croitoru2023diffusion}. In computer vision, diffusion-based neural networks generate images by reversing the diffusion process, which typically involves introducing Gaussian noise and then gradually removing it \cite{ho2020denoising}. Recently, a combination of diffusion models and DRL has been explored for task scheduling in edge computing \cite{du2024exploring, du2024enhancing}. For example, Du \textit{et al.} \cite{du2024diffusion} propose a Deep Diffusion-based Soft-Actor-Critic Task Scheduling (D2SAC-TS) algorithm that integrates diffusion models with DRL techniques to enhance decision optimization for task scheduling in edge computing systems. These methods learn decision policy by the combination of the diffusion model and the DRL technique, thereby enhancing the effectiveness of decision optimization. However, they fail to find the scheduling decision quickly and exhibit performance instability in complex edge environments since their diffusion processing is based on Gaussian noise.

\subsection{Comparison Analysis}
Table \ref{table1} summarizes the difference between current representative methods and our method. \textit{First}, current AIGC methods, such as \cite{midjourney, dalle}, mainly are enabled by the CS with a centralized platform. Consequently, our method is enabled by ESs with a distributed AIGC service deployment, alleviating network bandwidth limitations. Meanwhile, our method performs task scheduling in a distributed way, avoiding scheduling congestion for large user requests. \textit{Second}, although methods in \cite{xu2024enhance} and \cite{du2024diffusion} consider a distributed AIGC service deployment, they have a low balance capability between exploration and exploitation or consider the diffusion processing based on Gaussian noise. Moreover, the method in \cite{du2024diffusion} adopts a centralized task scheduling without delay optimization, which cannot be applied to our problem directly. In contrast, our method implements diffusion processing by using a latent action diffusion strategy, thereby achieving near-optimal policy quickly. Additionally, our method performs the delay optimization for distributed AIGC services, ensuring low service delays for AIGC applications.

\section{System Model and Problem Formulation}\label{sec3}
In this section, we first present the AIGC system model and then formally formulate our problem. 
Table \ref{symbol-table} summarizes the notations frequently used in this paper.

\begin{table}[!t]
    \caption{Symbols frequently used in this paper.}
    \centering
    \begin{tabular}{p{30pt}<{\centering}p{190pt}}
    \toprule
    Symbol & Description\\
    \midrule
    $\mathcal{T}$ & The set of time slots.\\
    $\mathcal{B}$ & The set of ESs or BSs.\\
    $\mathbb{N}_{b,t}$ & The task set that arrives to BS $b\in\mathcal{B}$ at time slot $t\in \mathcal{T}$.\\
    $\Delta$ & The length of each time slot $t \in \mathcal{T}$.\\
    $d_{n}$ & The data size of task $n \in \mathbb{N}_{b,t}$.\\
    $z_{n}$ & The generation quality demand of an AIGC task $n$.\\
    $\rho_{n}$ & The required computation density of offloading task $n$. \\
    $\boldsymbol{\psi}$ & The decision variable of task scheduling decision.\\
    $\boldsymbol{v}$ & The transmission rate between BSs\\
    $f_{b'}$ & The CPU or GPU computing capacity of ES $b' \in \mathcal{B}$ \\
    $T_{b,n,t,b'}^{\text{serv}}$ & The service delay of offloading task $n$ that is allocated from BS $b$ to ES $b'$ at time slot $t$.\\
    $T_{b,n,t,b'}^{\text{wait}}$ & The waiting time of task $n$ in the transmission queue at local BS $b$ and the processing queue at ES $b'$.\\
    $q_{t,b'}$ & The processing queue workload length of ES $b'$ at the end of time slot $t-1.$\\
    $q^{\text{bef}}_{n,t,b'}$ & The workload length of the processing queue at ES $b'$ before receiving task $n$ at time slot $t$.\\
    \bottomrule
    \end{tabular}
    \label{symbol-table}
\end{table}

\subsection{AIGC System Model}
We consider AIGC applications within a distributed edge system as illustrated in Fig. \ref{Fig2}. The network consists of multiple Base Stations (BSs), each equipped with an ES. These ESs have different computing capacities and deploy different AIGC services, such as SD \cite{ho2020denoising} and LLaMA \cite{touvron2023llama} models. All BSs are connected via a wired core network. Each ES has a scheduler with a processing queue. The processing queue retains the tasks pending computation by the available CPU and GPU resources. Then, when AIGC tasks reach a BS within a given time slot, the BS' scheduler determines the allocation of task workloads to the appropriate ESs for parallel processing. Let $\mathcal{B} =\{1,\cdots, B\}$ represent the set of ESs or BSs, with $b \in \mathcal{B}$ indicating the $b$-th ES and $B$ denoting the total count of ESs. The system operates over consecutive time slots, denoted by $\mathcal{T} =\{1,\cdots, |\mathcal{T}|\}$ \cite{Yu2021when, farhadi2021service}, where $t \in \mathcal{T}$ signifies the $t$-th time slot. We set the duration of each time slot $t \in \mathcal{T}$ to $\Delta$ seconds. In subsequent sections, we will elaborate on the AIGC task model, scheduling decision variable, and service delay model.

\subsubsection{AIGC Task Model}
We focus on the text-to-image and image-to-image tasks in AIGC applications. For convenience, we use the SD services as an AIGC service example for modeling. Notably, in the text-to-image task, each task's input is a piece of text and its corresponding demand of image quality. In contrast, the input of each image-to-image task has an image and the prompt of text description. Unlike general tasks, the workload of an AIGC task is typically not determined by its data size. To address this issue, we first let $\mathbb{N}_{b,t} = \{1, 2, \cdots, N_{b,t}\}$ denote a set of AIGC tasks arrived to BS $b$ at the same time slot $t$, where $n \in \mathbb{N}_{b,t}$ denotes as the $n$-th task. Then, for a specific text-to-image or image-to-image task $n$, we use a variable $d_{n}$ (in bits) to represent the inputted data (text or text and image) size of task $n$. Furthermore, by analyzing the network structure of the SD model, we observe that the generating image quality relies on the number of denoising steps. In other words, the more denoising steps the better image generation. Thus, the number of denoising steps can well represent the demand for generating image quality in the SD model. Based on this observation, we define a variable $z_{n}$ for the number of denoising steps and then the workload of the task $n$ can be modeled as $\rho_{n}\cdot z_{n}$ (in CPU or GPU cycles if GPU is available). Here, $\rho_{n}$ (in CPU cycles/step) is the required computing resources of each denoising step for task $n$ and is determined by the task type, AIGC model types, and ES performance. Notably, our modeling method can be extended to other AIGC applications by changing partial formulation expressions according to specific task types.

\subsubsection{Scheduling Decision Variable}
Each task will be allocated to a suitable ES to generate an image for each time slot in our problem. Let a binary variable $\psi_{b,n,t,b'} \in \{0, 1\}$ denote the allocation decision of offloading task $n$ from the BS $b$ to the ES $b' \in \mathcal{B}$ in time slot $t$. We have a feasible constraint as follows:
\begin{equation}\label{Eq-constrain}
    \sum\nolimits_{b'\in \mathcal{B}}\psi_{b,n,t,b'}=1,\ \forall b \in \mathcal{B}, n \in \mathbb{N}_{b,t}, t \in \mathcal{T}.
\end{equation}

\begin{figure}[!t]
    \centering
    \includegraphics[width=0.9\linewidth]{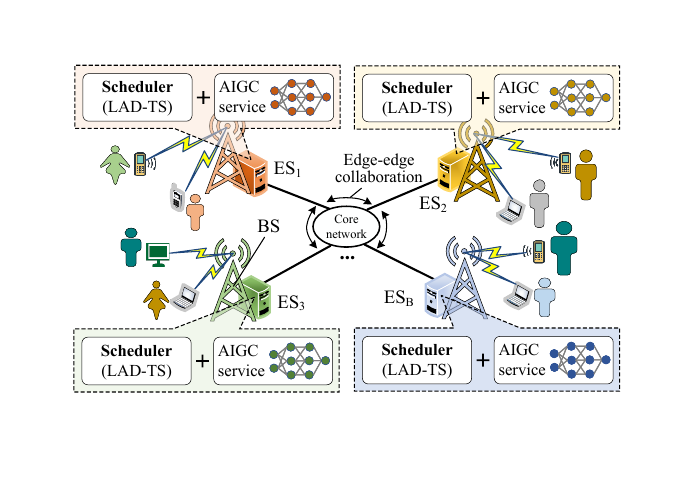}
    \caption{An illustration of the efficient distributed AIGC in this paper. For each AIGC task in each time slot, the scheduler will decide which ES to process the task by the proposed LAD-TS method.}
    \label{Fig2}
\end{figure}

\subsubsection{Service Delay Model}
AIGC tasks are processed in a first-come-first-served way. That is to say, each task should be processed after the completion of its previous arrival task in the processing queue. Let $T_{b,n,t,b'}^{\text{serv}}$ denote the total service delay of task $n$ that is uploaded to BS $b$ and is offloaded to ES $b'$ for processing in the time slot $t$. Thus, the $T_{b,n,t,b'}^{\text{serv}}$ includes the task transmission delays from the user end to BS $b$ and from BS $b$ to BS $b'$, the task computing delay on ES $b'$, the task waiting time $T^{\text{wait}}_{b,n,t,b'}$ in the processing queue at ES $b'$, and the transmission delay of generated result from BS $b'$ to BS $b$ and from BS $b$ to the user end. Thus, $T_{b,n,t,b'}^{\text{serv}}$ can be formulated as 
\begin{equation}\label{Eq-process-delay}
    T_{b,n,t,b'}^{\text{serv}} = \psi_{b,n,t,b'} \cdot(\frac{d_{n}}{v_{n,b',t}} + \frac{\rho_{n} \cdot z_{n}}{f_{b'}} +T^{\text{wait}}_{b,n,t,b'} + \frac{\tilde{d}_{n}}{v_{b',n,t}}),
\end{equation}
where $v_{n,b',t}$ (in bits/s) is the transmission rate from user end to BS $b'$, $v_{b',n,t}$ (in bits/s) is the transmission rate from BS $b'$ to BS $n$, $f_{b'}$ (in CPU or GPU cycles/s) is the CPU or GPU computing capacity of ES $b'$ if GPU is available, and $\tilde{d}_{n}$ is the result (i.e., image) size of processing task $n$. Thus, $T^{\text{wait}}_{b,n,t,b'}$ can be achieved by 
\begin{equation}\label{waiting-time}
	 T^{\text{wait}}_{b,n,t,b'} = ({q_{t-1,b'}+q^{\text{bef}}_{n,t,b'}})/{f_{b'}},
\end{equation}
where $q^{\text{bef}}_{n,t,b'}$ (in CPU or GPU cycles) is the workload length of the processing queue at ES $b'$ before receiving task $n$, which can be achieved by system observation. $q_{t-1,b'}$ (in CPU or GPU cycles) is the workload length of the processing queue of ES $b'$ at the end of time slot $t-1$, which can be updated by $q_{t,b'} = $
\begin{equation}\label{queue-update}
    \max\{q_{t-1,b'} + \sum_{b\in\mathcal{B}} \sum_{n \in \mathbb{N}_{b,t}} \psi_{b,n,t,b'} \cdot \rho_{n} \cdot z_{n} - f_{b'} \Delta,\; 0\},
\end{equation}
where the task workload of each ES at initial time slot $0$ is set to $0$, i.e., $q_{0,b'} = 0$ for $\forall b' \in \mathcal{B}$, and the queue workload length before task $1$ in time slot $0$ are set to $0$, i.e., $q^{\text{bef}}_{0,0,b'} = 0$ for $\forall b' \in \mathcal{B}$.

\subsection{Problem Formulation}
The objective of our problem for efficient AIGC is to minimize the average service delay of all the offloading tasks in the whole system, which is formulated as an online INLP optimization problem:
\begin{align}\label{Eq-problem}
    &\min_{\boldsymbol{\psi}}\lim_{|\mathcal{T}| \rightarrow \infty}\frac{1}{|\mathcal{T}|}\sum_{t\in\mathcal{T}} \sum_{b\in\mathcal{B}}\sum_{n\in\mathbb{N}_{b,t}}\sum_{b'\in\mathcal{B}} T_{b,n,t,b'}^{\text{serv}} \\
	&\mbox{s.t.}\quad\text{Eqn.}\; (\ref{Eq-constrain}), \nonumber \\
    &\quad\quad \psi_{b,n,t,b'} \in \{0,1\},\; \forall b,b' \in \mathcal{B}, n \in \mathbb{N}_{b,t}, t \in \mathcal{T}.\nonumber
\end{align}
Here, the optimization variable is task allocation decision $\boldsymbol{\psi} = \{\psi_{b,n,t,b'}\}_{b,b'\in \mathcal{B}, n\in \mathbb{N}_{b,t},t\in \mathcal{T}}$ in the problem. This offline counterpart of this problem is NP-hard, as proved by the following Theorem \ref{Theorem1}.
\begin{theorem}\label{Theorem1}
The offline counterpart of the problem (\ref{Eq-problem}) is an NP-hard problem.
\end{theorem}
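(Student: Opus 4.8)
The plan is to prove NP-hardness by a polynomial-time reduction from a known NP-hard problem to the offline version of problem (\ref{Eq-problem}), in which the full sequence of task arrivals $\{\mathbb{N}_{b,t}\}$ over all time slots is known in advance. The natural candidate is a scheduling / load-balancing problem, since the offline objective is to assign each AIGC task $n$ to exactly one ES $b'$ (by the constraint (\ref{Eq-constrain})) so as to minimize an aggregate delay that, through the queue-update recursion (\ref{queue-update}) and the waiting-time term (\ref{waiting-time}), couples tasks assigned to the same server. I would reduce from the classical \emph{Minimum Makespan Scheduling on Identical Machines} (equivalently, the decision version of multiprocessor scheduling / Bin Packing), which is strongly NP-hard: given $m$ machines and $k$ jobs with processing times $p_1,\dots,p_k$, decide whether the jobs can be partitioned so the maximum machine load is at most a target $L$.

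The key steps, in order, would be as follows. First, I would construct an instance of the offline problem: set the number of ESs $B=m$ with identical computing capacities $f_{b'}=f$, route all jobs through a single BS with zero inter-BS transmission cost (or a uniform constant), and collapse the arrivals into a single time slot so that the queue-coupling (\ref{queue-update}) reduces to the cumulative workload each server must process. Second, for each job $j$ I would create one AIGC task with workload $\rho_n z_n$ equal (up to scaling) to $p_j$, and zero (or constant, identical) data sizes $d_n,\tilde d_n$ so the transmission terms in (\ref{Eq-process-delay}) contribute a fixed additive offset that does not affect the ordering of solutions. Third, I would show that because tasks on the same server are processed first-come-first-served, the waiting-plus-computing contribution summed over all tasks assigned to a given server is a convex (quadratic) function of that server's total load; hence minimizing the \emph{sum} of service delays is equivalent to balancing the loads across servers, and the optimal offline objective is small exactly when the maximum per-server load is small. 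Fourth, I would argue the equivalence formally: a schedule achieving makespan $\le L$ yields an assignment $\boldsymbol{\psi}$ whose total delay is below a computed threshold, and conversely any $\boldsymbol{\psi}$ below that threshold induces a balanced partition, so a polynomial-time exact solver for the offline problem would decide the makespan instance.

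The main obstacle I expect is the fourth step: because the objective is a \emph{sum} of per-task delays rather than the makespan itself, the reduction is cleanest if I exploit the quadratic dependence of summed waiting times on per-server load to force balancing, and I must verify that the induced threshold is polynomially computable and that the additive transmission offsets and the first-come-first-served ordering do not break the equivalence in either direction. A cleaner alternative that sidesteps the quadratic-vs-makespan subtlety is to reduce instead from the decision version of \emph{Partition} (or Bin Packing): with two identical servers and job sizes equal to a set of integers whose total is $2S$, the summed-delay objective is minimized precisely when each server receives load exactly $S$, which happens iff a valid partition exists; this gives a tight, easily verifiable threshold. I would therefore most likely present the Partition-based reduction as the core argument, remarking that the strong NP-hardness of multiprocessor scheduling extends the result, and devote the bulk of the care to checking that the delay formula (\ref{Eq-process-delay})--(\ref{queue-update}), restricted to the constructed single-slot identical-server instance, is order-equivalent to the machine-load objective.
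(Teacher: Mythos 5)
Your reduction has a fatal gap at step three, and it cannot be repaired within the instance class you construct. The claim that the summed FCFS waiting-plus-computing delay on a server is a convex, quadratic function of that server's total load is false: on a unit-speed server, the task sequences $(1,3)$, $(3,1)$, $(2,2)$ and $(4)$ all carry total load $4$ but yield delay sums $5$, $7$, $6$ and $4$ respectively, so the sum depends on the composition and arrival order of the tasks, not just their total. Consequently, minimizing the sum of delays is \emph{not} equivalent to balancing loads; the objective in fact pushes the other way, toward SPT-like concentration. Concretely, take workloads $(10,10,1,1,1)$ on two identical unit-speed servers in one slot: every load-balanced assignment must split the two size-$10$ tasks, e.g.\ $\{10,1,1\}$ versus $\{10,1\}$, giving total delay $33+21=54$, whereas the maximally unbalanced assignment $\{10,10\}$ versus $\{1,1,1\}$ gives $30+6=36$. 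So in your Partition instance the optimum is generally attained at an \emph{unbalanced} assignment, and the equivalence ``total delay below threshold iff a perfect partition exists'' fails in both directions. Worse, the restricted family you reduce to (single time slot, one BS, two identical servers, fixed arrival order) is solvable in polynomial time: the total delay equals $\frac{1}{f}\sum_{n} w_n$ plus $\frac{1}{f}\sum w_{n'}$ taken over same-server ordered pairs $n'<n$, i.e.\ each task contributes its workload times one plus the number of \emph{later} tasks on its server, so a dynamic program over (arrival index, number of later tasks on server one) finds the optimum in $O(N^2)$ time. Hence no polynomial-time reduction from Partition, or from any NP-hard problem, to that family can exist unless P $=$ NP; the flaw is structural, not a matter of missing verification.

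What you were reaching for is the classical hardness proof for weighted completion times on two machines ($P2\,||\,\sum_j w_j C_j$), where one sets $w_j = p_j$ so that each machine's cost $(W_i^2 + \sum_j p_j^2)/2$ becomes order-independent and balancing is genuinely forced by convexity; but the objective in problem (\ref{Eq-problem}) is unweighted, so that identity is unavailable, and a correct argument must draw hardness from other features of the model, for instance arrivals spread over multiple time slots (release dates; recall that $1|r_j|\sum C_j$ is already strongly NP-hard), a number of servers growing with the input, or the capacity and heterogeneity structure. The paper itself takes this last route, if only sketchily: it views the ESs as knapsacks with capacity constraints $f_1,\dots,f_B$ and the tasks as items to be placed, asserting that the offline problem contains the multi-knapsack problem, and hence knapsack, as a special case. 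Whatever one thinks of the rigor of that argument, it does not rest on the load-balancing equivalence that breaks your reduction.
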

\begin{proof}
The offline counterpart of the problem (\ref{Eq-problem}) is proven to be NP-hard through reducing from the multi-knapsack problem \cite{gu2021layer}. Firstly, all the tasks are viewed as candidtouvron2023llamaate objects in the problem (\ref{Eq-problem}). The set $\mathcal{B}$ of ESs is viewed as multi-knapsacks. The tasks offloaded to the ES set are viewed as candidate objects placed into multiple knapsacks under the weight constraints of ES capacities $f_{1}$, $\cdots$, $f_{B}$. Thus, the problem (\ref{Eq-problem}) is a multi-knapsack problem. Secondly, as a general case of the knapsack problem that is well-known NP-hard, the offline counterpart of the problem (\ref{Eq-problem}) is also NP-hard.
\end{proof}

\section{Proposed Method and Algorithm}\label{sec4}
This section presents the specific design of our method with the proof of probability derivation and gives our algorithm implementation characterized by linear time complexity.

\subsection{Method Design}\label{sec4-1}
Driven by the NP-hardness of our problem and inspired by the exceptional generation capability of the Denoising Diffusion Probabilistic Model (DDPM) \cite{ho2020denoising}, we propose a LAD-TS method to quickly achieve a near-optimal solution to our problem (\ref{Eq-problem}) by incorporating the diffusion model and DRL technique. The overall architecture of our LAD-TS method is implemented with the basic SAC \cite{haarnoja2018soft} framework and reverse diffusion processing as shown in Fig.~\ref{Fig3}. Our architecture includes two parts: online task scheduling and offline network training. In particular, two Latent Action Diffusion Networks (LADNs), named s-LADN and t-LADN, are designed using reverse diffusion processing. The s-LADN and t-LADNs are used as the two actors in the online task scheduling and the periodic offline training, respectively. In the task scheduling part, the actor will generate a task scheduling decision by the s-LADN according to the inputs of the system state and latent action probability. The network training consists of the t-LADN (the same network structure and parameter as the s-LADN), two Critic Networks (CNs), and two Target Networks (TNs). The t-LADN is periodically trained using the history batch samples extracted from the experience pool. The CNs and TNs have the same network structure, and generate the system's evaluation Q-values $\boldsymbol{Q}_{\text{eval}}$ and target Q-values $\boldsymbol{Q}_{\text{target}}$, respectively. Notably, two key improvements of our method are designed: (1) a latent action diffusion strategy to find the near-optimal decision quickly and (2) a periodic offline training mechanism to avoid the poor performance of offline training. The state space, action space, reward function, actor structure, latent action diffusion strategy, and network training processing will be described as follows.

\begin{figure}[!t]
    \centering
    \includegraphics[width=\linewidth]{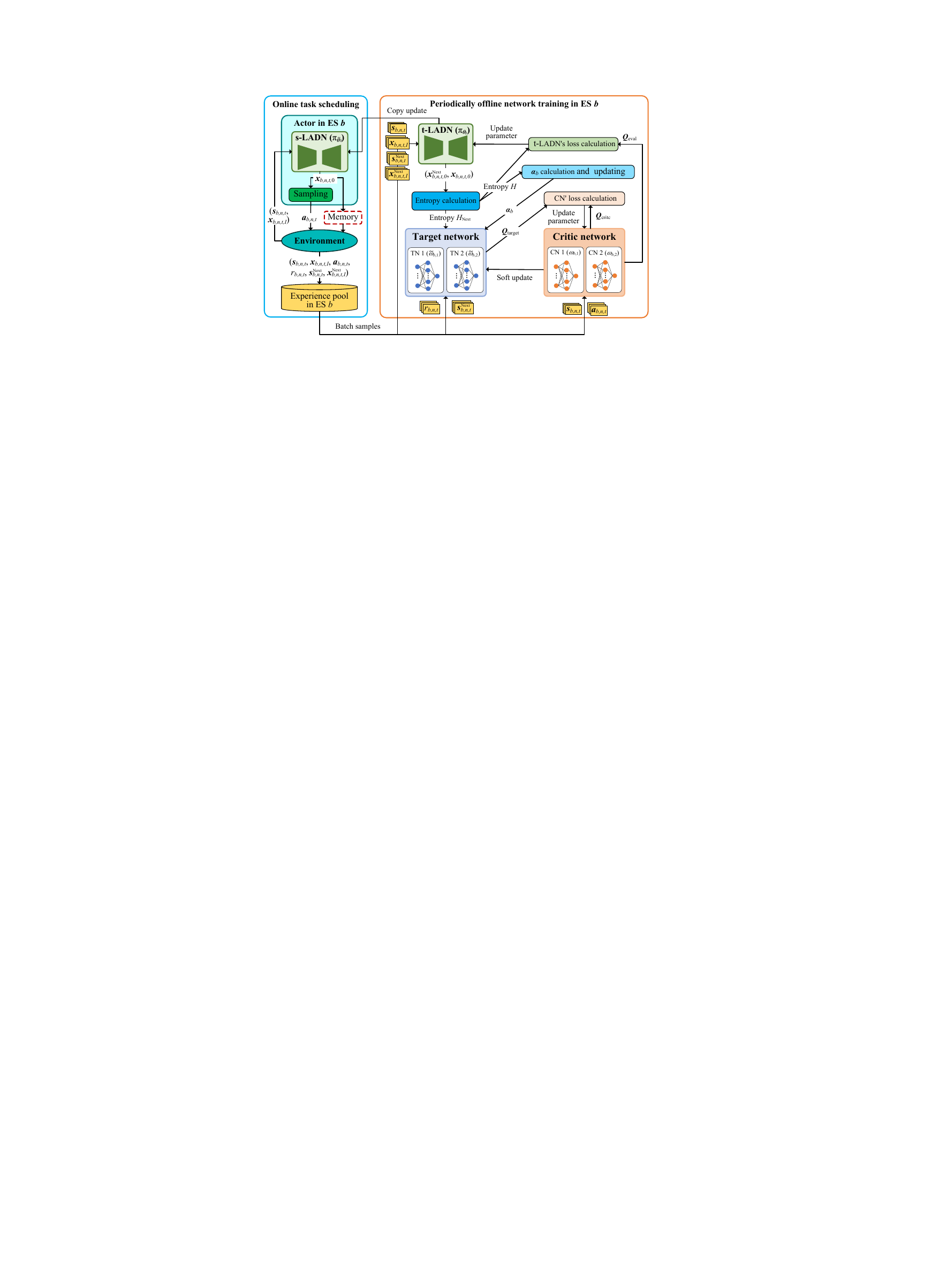}
    \caption{The overall architecture of our method. For each ES $b$, new arrival AIGC tasks are online offloaded to ESs for parallel processing by the actor. The actor utilizes the system state $\boldsymbol{s}_{b,n,t}$ and historical action probability $\boldsymbol{x}_{b,n,t}$ with edge-edge collaboration. The $\boldsymbol{x}_{b,n,t}$is stored in memory. The t-LADN model is periodically offline trained using history samples.}
    \label{Fig3}
\end{figure}

\textbf{State space.} In our method, each task $n \in \mathbb{N}_{b,t}$ is sequentially processed with one-by-one. Let a vector $\boldsymbol{s}_{b,n,t}$ denote the system state when a task $n$ is arrived at the BS $b$ at time slot $t$. Intuitively, the arrival task size, the task workload length, and the processing queue length at each ES will affect the decision generation and service delay of task offloading. Thus, $\boldsymbol{s}_{b,n,t}$ is formulated as
\begin{equation}
	\boldsymbol{s}_{b,n,t} = [d_{n},\; \rho_{n}\cdot z_{n},\; \boldsymbol{q}_{t-1}],
\end{equation}
where $\boldsymbol{q}_{t-1}=[q_{t-1,1}, q_{t-1,2}, \cdots, q_{t-1,B}]$ is achieved by the Eqn. (\ref{queue-update}). Furthermore, let a variable $\boldsymbol{s}_{b,n,t}^{\text{next}}$ denote the next system state after offloading a task $n$. $\boldsymbol{s}_{b,n,t}^{\text{next}}$ is formulated as
\begin{equation}
	\boldsymbol{s}_{b,n,t}^{\text{next}} = \begin{cases}
        \boldsymbol{s}_{b,n+1,t}, & n < N_{b,t}.\\
        \boldsymbol{s}_{b,1,t+1}, & \text{Otherwise.}
        \end{cases}
\end{equation}

\textbf{Action space.} We use a binary vector $\boldsymbol{a}_{b,n,t}$ = $[\psi_{b,n,t,1}$, $\psi_{b,n,t,2}$, $\cdots$, $\psi_{b,n,t,B}]$ to represent the action of a task $n$ that is arrived to the BS $b$ at time slot $t$. Let $\mathcal{A}$ represent the set of all action spaces. We have $\mathcal{A} = \{[1, 0, 0, \cdots, 0] $, $[0, 1, 0, \cdots, 0]$, $\cdots$, $[0, 0, 0, \cdots, 1]\}$. In our problem, an action $ \boldsymbol{a} \in \mathcal{A}$ is determined by our t-LADN, i.e., $\pi_{\boldsymbol{\theta}_{b}}(\cdot)$, with the inputs of current system state $\boldsymbol{s}_{b,n,t}$, latent action probability $\boldsymbol{x}_{b,n,t,I}$, and timestep $I$. Furthermore, $\boldsymbol{a}_{b,n,t}$ is formulated as
\begin{equation}\label{action_eq}
    \boldsymbol{a}_{b,n,t} = \mathop{\arg\max}\limits_{\boldsymbol{a}}\{ \pi_{\boldsymbol{\theta}_{b}}(\boldsymbol{a}|\boldsymbol{s}_{b,n,t}, \boldsymbol{x}_{b,n,t,I},I), \forall{\boldsymbol{a}} \in \mathcal{A}\},
\end{equation}
where $\pi_{\boldsymbol{\theta}_{b}}(\boldsymbol{a}|\boldsymbol{s}_{b,n,t}, \boldsymbol{x}_{b,n,t,I},I)$ is the probability of generating the action $\boldsymbol{a}$. $\boldsymbol{\theta}_{b}$ is the t-LADN's parameters in ES $b$. 

\textbf{Reward function.} We hope the fewer service delays the more rewards. Thus, the negative task service delay is defined as the reward in our method, i.e., the reward $r_{b,n,t}$ of offloading task $n$ under giving $\boldsymbol{a}_{b,n,t}$ is formulated as
\begin{equation}\label{Eq-reward}
    r_{b,n,t} = -T_{b,n,t,b'}^{\text{serv}}.
\end{equation}

\begin{figure*}[!t]
    \centering
    \includegraphics[width=\linewidth]{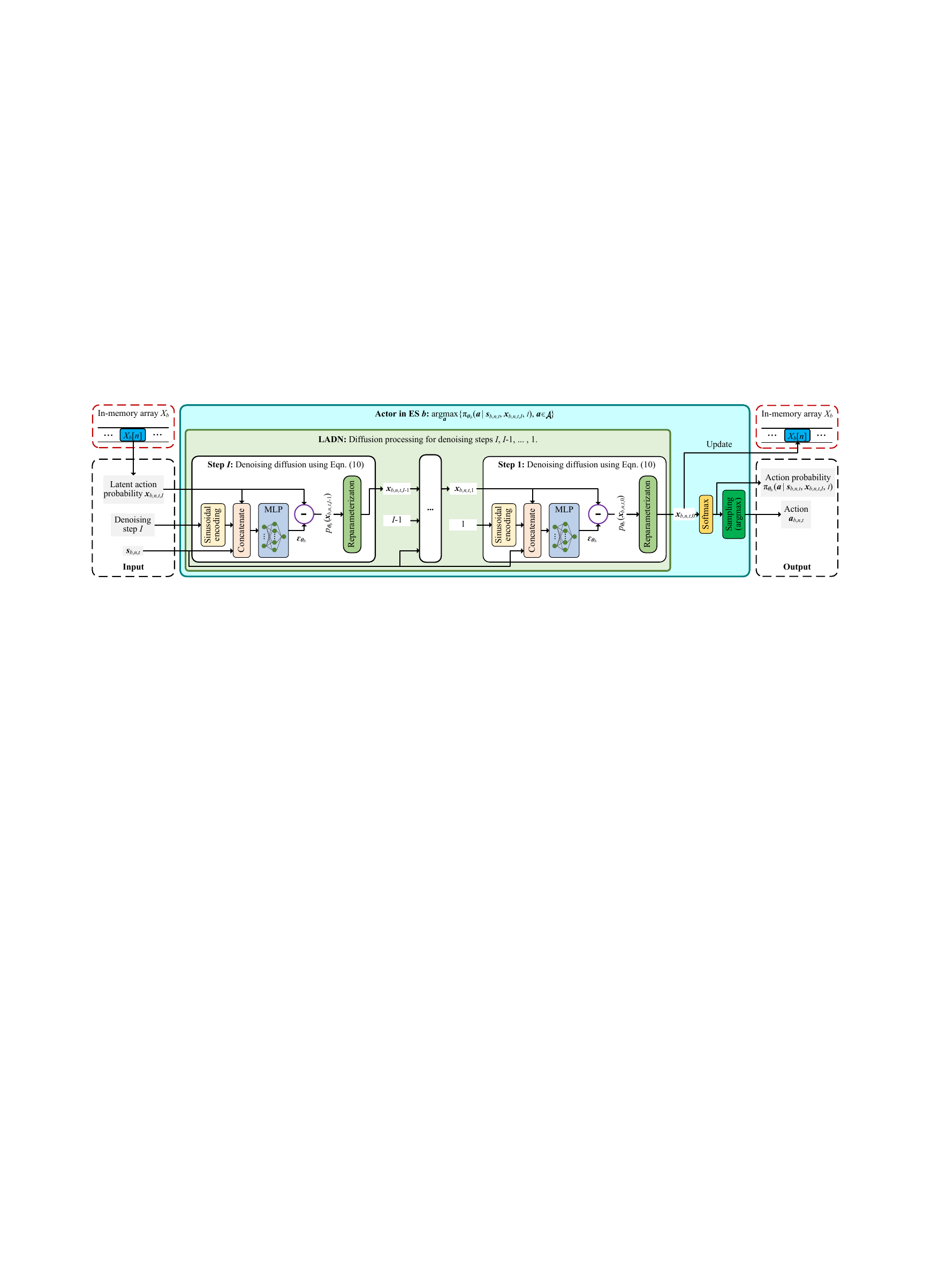}
    \caption{The Actor structure with proposed LADN model. The actor input are the timestep $I$, latent action probability $\boldsymbol{x}_{b,n,t,I}$, and system state $\boldsymbol{s}_{b,n,t}$. The output is the action decision $\boldsymbol{a}_{b,n,t}$. The historical action probability $\boldsymbol{x}_{b,n,t,0}$ is stored (or updated) into the array $X_{b}[n]$.}
    \label{Fig4}
\end{figure*}

\textbf{Actor structure.} The actor structure consists of a LADN, a softmax unit, and a sampling unit as shown in Fig. \ref{Fig4}. The LADN model has $I$ denoising steps. Each denoising step has a sinusoidal encoding unit, a concatenate unit, a Multi-Layer Perception (MLP), and a reparameterization unit. At the first denoising step, the input is the $\boldsymbol{x}_{b,n,t,I}$, $I$, and $\boldsymbol{s}_{b,n,t}$. The output is $\boldsymbol{x}_{b,n,t,I-1}$. Then, the $\boldsymbol{x}_{b,n,t,I-1}$, $I-1$, and $\boldsymbol{s}_{b,n,t}$ are used as the input of the second denoising step. After, followed by the operation of $I$ denoising steps, the action probability $\boldsymbol{x}_{b,n,t,0}$ will be outputted in the last denoising step. Furthermore, the action probability $ \pi_{\boldsymbol{\theta}_{b}}(\boldsymbol{a}|\boldsymbol{s}_{b,n,t}, \boldsymbol{x}_{b,n,t,I},I)$ is achieved by taking the softmax operation on $\boldsymbol{x}_{b,n,t,0}$. Finally, the optimal $\boldsymbol{a}_{b,n,t}$ is achieved by the sampling operation using Eqn. (\ref{action_eq}). Here, the $\boldsymbol{x}_{b,n,t,0}$ is achieved by the \textbf{probability derivation} as shown in the following Theorem \ref{Theorem2}.

\begin{theorem}\label{Theorem2}
The action probability $\boldsymbol{x}_{b,n,t,0}$ is inferred by the update rule: $\boldsymbol{x}_{b,n,t,i-1} =$
\begin{equation}\label{Eq-update-rule}
    \frac{1}{\sqrt{\lambda_{i}}}(\boldsymbol{x}_{b,n,t,i}  - \frac{\beta_{i}}{\sqrt{1-\overline{\lambda}}}\boldsymbol{\epsilon}_{\boldsymbol{\theta}_{b}}(\boldsymbol{x}_{b,n,t,i}, i, \boldsymbol{s}_{b,n,t}))+ \frac{\tilde{\beta}_{i}}{2} \cdot \boldsymbol{\epsilon},
\end{equation}
where $i=I, I-1, \cdots, 1$, $\beta_{i} = 1 - e^{-\frac{\beta_{\text{min}}}{I}-\frac{2i-1}{2I^{2}}(\beta_{\text{max}}-\beta_{\text{min}})}$ represents the forward process variance \cite{ho2020denoising}, $\tilde{\beta}_{i} = \frac{1-\overline{\lambda}_{i-1}}{1-\overline{\lambda}_{i}}\cdot\beta_{i}$ is a deterministic variance parameter, $\lambda_{i} = 1 - \beta_{i}$, $\overline{\lambda}_{i} = \prod_{m=1}^{i}\lambda_{m}$ is the cumulative production of $\lambda_{m}$ over previous $m\;(m \leq i)$ denoising steps, $\lambda_{m} = 1 - \beta_{m}$, $\boldsymbol{\epsilon}_{\boldsymbol{\theta}_{b}}(\boldsymbol{x}_{b,n,t,i}, i, \boldsymbol{s}_{b,n,t})$ is the MLP's output, and $\boldsymbol{\epsilon}\sim \mathcal{N}(\textbf{0}, \textbf{I})$.
\end{theorem}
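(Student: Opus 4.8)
The claimed update rule is precisely the ancestral sampling step of a Denoising Diffusion Probabilistic Model \cite{ho2020denoising}, so the plan is to follow that derivation, treating the system state $\boldsymbol{s}_{b,n,t}$ as a fixed conditioning context that rides through every step and suppressing the indices $(b,n,t)$ by writing $\boldsymbol{x}_i$ for $\boldsymbol{x}_{b,n,t,i}$. First I would fix the forward (noising) Markov chain as $q(\boldsymbol{x}_i\mid\boldsymbol{x}_{i-1})=\mathcal{N}(\boldsymbol{x}_i;\sqrt{\lambda_i}\,\boldsymbol{x}_{i-1},\beta_i\mathbf{I})$ with $\lambda_i=1-\beta_i$; the reverse chain we actually sample from is the time-reversal of this process, so pinning down the forward transitions is what makes the reverse posterior tractable.

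Second, I would establish the closed-form $i$-step marginal $q(\boldsymbol{x}_i\mid\boldsymbol{x}_0)=\mathcal{N}(\boldsymbol{x}_i;\sqrt{\overline{\lambda}_i}\,\boldsymbol{x}_0,(1-\overline{\lambda}_i)\mathbf{I})$ by induction on $i$: composing two independent Gaussian transitions adds their (appropriately scaled) variances, and telescoping the products of the $\lambda_m$ produces exactly the cumulative factor $\overline{\lambda}_i=\prod_{m=1}^{i}\lambda_m$. This marginal gives the reparameterization $\boldsymbol{x}_i=\sqrt{\overline{\lambda}_i}\,\boldsymbol{x}_0+\sqrt{1-\overline{\lambda}_i}\,\boldsymbol{\epsilon}$, equivalently $\boldsymbol{x}_0=(\boldsymbol{x}_i-\sqrt{1-\overline{\lambda}_i}\,\boldsymbol{\epsilon})/\sqrt{\overline{\lambda}_i}$, which I will hold in reserve to eliminate $\boldsymbol{x}_0$ at the end.

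Third, and this is the crux, I would apply Bayes' rule to the tractable reverse posterior $q(\boldsymbol{x}_{i-1}\mid\boldsymbol{x}_i,\boldsymbol{x}_0)\propto q(\boldsymbol{x}_i\mid\boldsymbol{x}_{i-1})\,q(\boldsymbol{x}_{i-1}\mid\boldsymbol{x}_0)$. Multiplying the two Gaussian densities and completing the square in $\boldsymbol{x}_{i-1}$ shows the posterior is itself Gaussian; reading off its covariance yields $\tilde{\beta}_i=\frac{1-\overline{\lambda}_{i-1}}{1-\overline{\lambda}_i}\beta_i$ and its mean yields $\tilde{\boldsymbol{\mu}}_i=\frac{\sqrt{\overline{\lambda}_{i-1}}\,\beta_i}{1-\overline{\lambda}_i}\boldsymbol{x}_0+\frac{\sqrt{\lambda_i}\,(1-\overline{\lambda}_{i-1})}{1-\overline{\lambda}_i}\boldsymbol{x}_i$. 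Substituting the expression for $\boldsymbol{x}_0$ from the previous step collapses this mean to the compact form $\frac{1}{\sqrt{\lambda_i}}\bigl(\boldsymbol{x}_i-\frac{\beta_i}{\sqrt{1-\overline{\lambda}_i}}\boldsymbol{\epsilon}\bigr)$.

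Finally, since the true noise $\boldsymbol{\epsilon}$ is unavailable at generation time, I would replace it by the MLP's learned prediction $\boldsymbol{\epsilon}_{\boldsymbol{\theta}_b}(\boldsymbol{x}_i,i,\boldsymbol{s}_{b,n,t})$ in the mean and then sample $\boldsymbol{x}_{i-1}$ from the posterior Gaussian, i.e.\ add a stochastic increment governed by the posterior scale $\tilde{\beta}_i$ together with $\boldsymbol{\epsilon}\sim\mathcal{N}(\mathbf{0},\mathbf{I})$; iterating this step from $i=I$ down to $1$ returns the sought action probability $\boldsymbol{x}_{b,n,t,0}$. I expect the main obstacle to be the third step: the completing-the-square bookkeeping must reproduce the coefficients $\tilde{\beta}_i$ and $\tilde{\boldsymbol{\mu}}_i$ exactly, and the subsequent cancellation that folds $\sqrt{\overline{\lambda}_{i-1}}$, $1-\overline{\lambda}_{i-1}$, and the $\boldsymbol{x}_0$-substitution into the single prefactor $1/\sqrt{\lambda_i}$ is where sign and normalization slips are easy to make; everything else is routine Gaussian algebra with $\boldsymbol{s}_{b,n,t}$ merely carried along.
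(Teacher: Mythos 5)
Your proposal follows exactly the same route as the paper's proof: the closed-form forward marginal $\boldsymbol{x}_i = \sqrt{\overline{\lambda}_i}\,\boldsymbol{x}_0 + \sqrt{1-\overline{\lambda}_i}\,\boldsymbol{\epsilon}$, Bayes' rule to obtain the Gaussian reverse posterior with variance $\tilde{\beta}_i$, substitution of the reconstructed $\boldsymbol{x}_0$ to collapse the posterior mean to $\frac{1}{\sqrt{\lambda_i}}(\boldsymbol{x}_i - \frac{\beta_i}{\sqrt{1-\overline{\lambda}_i}}\boldsymbol{\epsilon}_{\boldsymbol{\theta}_b})$, and a final reparameterized sampling step; the paper merely compresses your first two steps by citing \cite{ho2020denoising} and asserting the Bayes computation rather than carrying out the completing-the-square bookkeeping you outline. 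The only caveat is that your (standard DDPM) derivation would yield a noise scale of $\sqrt{\tilde{\beta}_i}$, whereas the printed update rule writes $\tilde{\beta}_i/2$ (and drops the subscript in $\sqrt{1-\overline{\lambda}}$) --- these mismatches lie in the paper's statement, not in your argument.
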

\begin{proof}
Firstly, according to the forward diffusion processing in \cite{ho2020denoising}, we have
\begin{align}\label{Eq-forward-diffusion}
    \boldsymbol{x}_{b,n,t,i} = \sqrt{\overline{\lambda}_{i}}\cdot\boldsymbol{x}_{b,n,t,0} + \sqrt{1 - \overline{\lambda}_{i}}\cdot\boldsymbol{\epsilon}.
\end{align}

Secondly, let $p(\boldsymbol{x}_{b,n,t,i-1}|\boldsymbol{s}_{b,n,t}, i, \boldsymbol{x}_{b,n,t,i})$ denote the transition function from $\boldsymbol{x}_{b,n,t,i}$ to $\boldsymbol{x}_{b,n,t,i-1}$ and it follows a Gaussian distribution given by $p_{\boldsymbol{\theta}_{b}}(\boldsymbol{x}_{b,n,t,i-1}|\boldsymbol{s}_{b,n,t}, i, \boldsymbol{x}_{b,n,t,i})=$
\begin{equation}\label{Eq-diffusion-distribution}
\mathcal{N}(\boldsymbol{x}_{b,n,t,i-1};\boldsymbol{\mu}_{\boldsymbol{\theta}_{b}}(\boldsymbol{x}_{b,n,t,i},i,\boldsymbol{s}_{b,n,t}), \tilde{\beta}_{i}\boldsymbol{\text{I}}), 
\end{equation}
where $i\in\{1, 2 $, $\cdots, I\}$, $\boldsymbol{\mu}_{\boldsymbol{\theta}_{b}}(\boldsymbol{x}_{b,n,t,i},i,\boldsymbol{s}_{b,n,t})$ is the distribution mean and can be learned by a deep model, $\tilde{\beta}_{i} = \frac{1-\overline{\lambda}_{i-1}}{1-\overline{\lambda}_{i}}\cdot\beta_{i}$ is a deterministic variance parameter, $\beta_{i} = 1 - e^{-\frac{\beta_{\text{min}}}{I}-\frac{2i-1}{2I^{2}}(\beta_{\text{max}}-\beta_{\text{min}})}$ represents the forward process variance controlled by the variational posterior scheduler \cite{ho2020denoising}, $\tilde{\lambda}_{i} = \prod_{m=1}^{i}\lambda_{m}$ is the cumulative production of $\lambda_{m}$ over previous $m\;(m \leq i)$ denoising steps, and the $\lambda_{m} = 1 - \beta_{m}$.

Thirdly, by applying the Bayesian formula and combining the reconstructed sample $\boldsymbol{x}_{b,n,t,0}$ in Eqn. (\ref{Eq-forward-diffusion}), we have $\boldsymbol{\mu}_{\boldsymbol{\theta}_{b}}(\boldsymbol{x}_{b,n,t,i},i,\boldsymbol{s}_{b,n,t})=$\\
\begin{equation}
\frac{1}{\sqrt{\lambda_{i}}}(\boldsymbol{x}_{b,n,t,i}  - \frac{\beta_{i}}{\sqrt{1-\overline{\lambda}}}\cdot\boldsymbol{\epsilon}_{\boldsymbol{\theta}_{b}}(\boldsymbol{x}_{b,n,t,i}, i, \boldsymbol{s}_{b,n,t})). 
\end{equation}

Finally, by employing reparameterization technique for the $p_{\boldsymbol{\theta}_{b}}(\boldsymbol{x}_{b,n,t,i-1}|\boldsymbol{s}_{b,n,t}, i, \boldsymbol{x}_{b,n,t,i})$, we can get the recurrent relationship as shown in Eqn. (\ref{Eq-update-rule}).
\end{proof}

\textbf{Latent Action Diffusion Strategy.} Existing diffusion-based DRL methods perform diffusion processing starting from Gaussian noise, which easily leads to divergence in decision-making, degrading their performance. To address this issue, therefore, we design a latent action probability strategy. Specifically, we introduce a latent action probability $\boldsymbol{x}_{b,n,t,I}$ instead of the random Gaussian noise for each $b \in \mathcal{B}$, $n \in \mathbb{N}_{b,t}$, and $t\in\mathcal{T}$ by considering tasks that usually have a specific periodic pattern over a certain period. For convenience, we assume $N=\max\{N_{b,t}\}_{b\in\mathcal{B},t\in\mathcal{T}}$ and create an array $X_b$ with length $N$ for each $b \in \mathcal{B}$ to store the historical action probability of task $n \in \mathbb{N}_{b,t}$, where each element $X_b[n]$ is initialized by a standard Gaussian distribution. Then, for each task $n\in\mathbb{N}_{b,t}$ at time slot $t$, the $\boldsymbol{x}_{b,n,t,I}$ is achieved by using $X_b[n]$. After executing the diffusion processing, we can get its output $\boldsymbol{x}_{b,n,t,0}$ and update $X_b[n]\leftarrow \boldsymbol{x}_{b,n,t,0}$. Furthermore, we also add the $\boldsymbol{x}_{b,n,t,I}$ to the transition tuple to improve the network training, i.e., using the $(\boldsymbol{s}_{b,n,t}$, $\boldsymbol{x}_{b,n,t,I}$, $\boldsymbol{a}_{b,n,t}$, $r_{b,n,t}$, $\boldsymbol{s}_{b,n,t}^{\text{next}}$, $\boldsymbol{x}_{b,n,t,I}^{\text{next}})$ instead of the $(\boldsymbol{s}_{b,n,t}$, $\boldsymbol{a}_{b,n,t}$, $r_{b,n,t}$, $\boldsymbol{s}_{b,n,t}^{\text{next}})$ in our method. Like this, $\boldsymbol{x}_{b,n,t,I}$ is further initialized by $\boldsymbol{x}_{b,n,t-1,0}$. Then, our method can quickly achieve the near-optimal decision by using the historical action probability, which ensures lower service delay than diffusion-based DRL methods. 

\textbf{Network training.} To well ensure the effectiveness of diffusion processing, the $\boldsymbol{x}_{b,n,t}$ is also added to the transition tuple for network training in our method. Let $\tilde{\boldsymbol{\theta}}_{b}$, $\boldsymbol{\omega}_{b,1}$, $\boldsymbol{\omega}_{b,2}$, $\tilde{\boldsymbol{\omega}}_{b,1}$ and $\tilde{\boldsymbol{\omega}}_{b,2}$ denote the parameters of the s-LADN, CN 1, CN 2, TN 1, and TN 2 in ES $b$, respectively. Each ES has a trainer to perform network training using batch samples in the experience pool. The parameters of $\boldsymbol{\theta}_{b}$, $\tilde{\boldsymbol{\theta}}_{b}$, $\{\boldsymbol{\omega}_{b,j}\}_{j=1,2}$, and $\{\tilde{\boldsymbol{\omega}}_{b,j}\}_{j=1,2}$ are updated as follows. 

Firstly, $K$ samples are extracted from the experience pool: $\{[\boldsymbol{s}^{k}_{b,n,t}, \boldsymbol{x}^{k}_{b,n,t}, \boldsymbol{a}^{k}_{b,n,t}, r^{k}_{b,n,t}, \boldsymbol{s}^{k,\text{next}}_{b,n,t}, \boldsymbol{x}^{k,\text{next}}_{b,n,t}]\}_{k=1, 2, \cdots, K}$. Then, $r_{b,n,t}$, $\boldsymbol{s}_{b,n,t}^{\text{next}}$, and $\boldsymbol{x}_{b,n,t}^{\text{next}}$ are inputted to the t-LADN and TNs for $\boldsymbol{Q}_{\text{target}}$ calculation. More precisely, $Q^{k}_{\text{target}}$ = $r_{b,n,t}$ + $\gamma\cdot(\pi_{\boldsymbol{\theta}_{b}}(\boldsymbol{a}^{k,\text{next}}_{b,n,t}|\boldsymbol{s}^{k,\text{next}}_{b,n,t}, \boldsymbol{x}^{k,\text{next}}_{b,n,t}, I)  \cdot Q^{k,\text{min}}_{\text{target}} + \alpha_{b} \cdot H_{\text{next}}^{k}(\pi_{\boldsymbol{\theta}_{b}}))$ and $\boldsymbol{a}^{k,\text{next}}_{b,n,t}\sim\pi_{\boldsymbol{\theta}_{b}}(\cdot|\boldsymbol{s}^{k,\text{next}}_{b,n,t}, \boldsymbol{x}^{k,\text{next}}_{b,n,t}, I)$, where $\gamma$ is a reward decay factor and  $\alpha_{b}$ is a temperature coefficient, $ Q^{k,\text{min}}_{\text{target}} = \min_{j=1,2}\{Q_{\tilde{\boldsymbol{\omega}}_{b,j}} (\boldsymbol{s}^{k,\text{next}}_{b,n,t}, \boldsymbol{a}^{k,\text{next}}_{b,n,t})\}$. $H_{\text{next}}^k(\pi_{\boldsymbol{\theta}_{b}})$ is the entropy of the action probability distribution with the inputs of $\boldsymbol{s}_{b,n,t}^{\text{next}}$ and $\boldsymbol{x}_{b,n,t}^{\text{next}}$. We have $H_{\text{next}}^{k}(\pi_{\boldsymbol{\theta}_{b}})$ = $-\pi_{\boldsymbol{\theta}_{b}}(\boldsymbol{a}^{k,\text{next}}_{b,n,t}|\boldsymbol{s}^{k,\text{next}}_{b,n,t}, \boldsymbol{x}^{k,\text{next}}_{b,n,t}, I)^\top
\times \log\pi_{\boldsymbol{\theta}_{b}}(\boldsymbol{a}^{k,\text{next}}_{b,n,t}|\boldsymbol{s}^{k,\text{next}}_{b,n,t}, \boldsymbol{x}^{k,\text{next}}_{b,n,t}, I).$

Secondly, the $\{\boldsymbol{\omega}_{b,j}\}_{j=1,2}$ are updated by minimizing the following critic loss function: $\mathcal{L}_{Q}(\boldsymbol{\omega}_{b,j})=$
\begin{align}\label{Eq-omega-update}
    \frac{1}{K}\sum\nolimits_{k=1}^{K}\left(Q_{\boldsymbol{\omega}_{b,j}}(\boldsymbol{s}^{k}_{b,n,t}, \boldsymbol{a}^{k}_{b,n,t})- Q^{k}_{\text{target}}\right)^2.
\end{align}

Thirdly, the $\boldsymbol{\theta}_{b}$ is updated by minimizing the following actor loss function:
\begin{align}\label{Eq-theta-update}
    \mathcal{L}_{\pi}(\boldsymbol{\theta}_{b})= & \frac{1}{K}\sum\nolimits_{k=1}^{K} ( -\alpha_{b}\cdot H^{k}(\pi_{\boldsymbol{\theta}_{b}})\nonumber \\
    &-\pi_{\boldsymbol{\theta}_{b}}(\boldsymbol{a}^{k}_{b,n,t}|\boldsymbol{s}^{k}_{b,n,t}, \boldsymbol{x}^{k}_{b,n,t}, I) \cdot Q^{k}_{\text{eval}})^2,
\end{align}
where $Q^{k}_{\text{eval}}=\min_{j=1,2}\{Q_{\boldsymbol{\omega}_{b,j}}(s^{k}_{b,n,t}, \boldsymbol{a}^{k}_{b,n,t})\}$. $H^{k}(\pi_{\boldsymbol{\theta}_{b}})$ is the entropy of the action probability distribution with the inputs of $\boldsymbol{s}_{b,n,t}$ and $\boldsymbol{x}_{b,n,t}$. We have $
H^{k}(\pi_{\boldsymbol{\theta}_{b}})$ = $- \pi_{\boldsymbol{\theta}_{b}} (\boldsymbol{a}^{k}_{b,n,t}|\boldsymbol{s}^{k}_{b,n,t}, \boldsymbol{x}^{k}_{b,n,t}, I)^\top
\times \log\pi_{\boldsymbol{\theta}_{b}}(\boldsymbol{a}^{k}_{b,n,t}|\boldsymbol{s}^{k}_{b,n,t}, \boldsymbol{x}^{k}_{b,n,t}, I)$.

Fourthly, the $\alpha_{b}$ is updated by minimizing the entropy loss function for each BS $b \in \mathcal{B}$:
\begin{equation}\label{Eq-alpha-update}
    \mathcal{L}({\alpha_{b}}) = (-H^{k}(\pi_{\boldsymbol{\theta}_{b}}) - \tilde{H}) \cdot\alpha_{b},
\end{equation}
where $\tilde{H}$ is a hyper-parameter of target entropy.

Finally, the $\{\tilde{\boldsymbol{\omega}}_{b,j}\}_{j=1,2}$ are updated by the soft updating operation as:
\begin{equation}\label{Eq-tildeomega-update}
    \tilde{\boldsymbol{\omega}}_{b,j} =  \tau_{b} \cdot \boldsymbol{\omega}_{b,j} + (1-\tau_{b}) \cdot \tilde{\boldsymbol{\omega}}_{b,j},
\end{equation}
where the $\tau_{b}$ is the weight of the soft updating operation. The $\tilde{\boldsymbol{\theta}}_{b}$ is updated by copying $\boldsymbol{\theta}_{b}$.

\subsection{Algorithm Implementation}\label{sec4-4}
We implement our LAD-TS method as an online distributed algorithm, as shown in Algorithm \ref{LAD-TS-algorithm}. In this algorithm, the inputs are the AIGC tasks with their data $\{d_{n}, z_{n}\}_{n\in\mathbb{N}_{b,t}}$ and denoising step $I$. The output is the task allocation decision $\boldsymbol{\psi}$. The Algorithm \ref{LAD-TS-algorithm} works as follows.

\begin{algorithm}[!t]
\DontPrintSemicolon
\KwInput{The AIGC task data $\{d_{n}, z_{n}\}_{n\in\mathbb{N}_{b,t}, b\in\mathcal{B}, t\in\mathcal{T}}$; Denosing step $I$;}
\KwOutput{The task allocation decision $\boldsymbol{\psi}$;}
Initialize the array $X_{b}$ by a standard Gaussian distribution for each $b\in\mathcal{B}$;\\
Initialize the LADNs, CNs, and TNs with random $\tilde{\boldsymbol{\theta}}_{b}$, $\boldsymbol{\theta}_{b}$, $\{\boldsymbol{\omega}_{b,j}\}_{j=1,2}$, and $\{\tilde{\boldsymbol{\omega}}_{b,j}\}_{j=1,2}$ for each $b\in \mathcal{B}$;\\
Initialize the queue $q_{0,b'} = 0$ and the experience pool $\mathcal{R}_{b'}$ = $\emptyset$ for each $b' \in \mathcal{B}$;\\
\For{\rm{episode $\in\{1, 2,\cdots, E\}$}}
{
  Reset system environment;\\
  \ForEach{\rm{time slot $t\in \mathcal{T}$}}
    {   \For{\rm{all BS $b \in \mathcal{B}$ in parallel}}
        {
            \ForEach{\rm{new arrival task $n \in \mathbb{N}_{b,t}$}}
            {                  
                Observe $\boldsymbol{s}_{b,n,t}$ and $\boldsymbol{x}_{b,n,t,I}\leftarrow X_{b}[n]$;\\
                Generate $\boldsymbol{x}_{b,n,t,0}$ and $\boldsymbol{a}_{b,n,t}$ by the Actor model;\\
                Calculate $r_{b,n,t}$ using Eqn. (\ref{Eq-reward});\\
                Update $X_{b}[n] \leftarrow \boldsymbol{x}_{b,n,t,0}$;\\
                Observe $\boldsymbol{s}_{b,n,t}^{\text{next}}$ and $\boldsymbol{x}_{b,n,t,I}^{\text{next}}$;\\
                Store the tuple $(\boldsymbol{s}_{b,n,t}$, $\boldsymbol{x}_{b,n,t,I}$, $\boldsymbol{a}_{b,n,t}$, $r_{b,n,t}$, $\boldsymbol{s}_{b,n,t}^{\text{next}}$, $\boldsymbol{x}_{b,n,t,I}^{\text{next}})$ to $\mathcal{R}_{b}$;\\ 
            }       
            \If{\rm{$|\mathcal{R}_{b}|$ > 300}}
            {         
                Sample $K$ tuples from $\mathcal{R}_{b}$;\\
                Execute network training to update $\{\boldsymbol{\omega}_{b,j}\}_{j=1,2}$, $\boldsymbol{\theta}_{b}$, $\alpha_{b}$, and $\{\tilde{\boldsymbol{\omega}}_{b,j}\}_{j=1,2}$ using Eqns. (\ref{Eq-omega-update}) - (\ref{Eq-tildeomega-update});\\         
                Update $\tilde{\boldsymbol{\theta}}_{b}$ by copying $\boldsymbol{\theta}_{b}$;\\
            }     
        }
        Update $\boldsymbol{q}_{t}$ by the Eqn. (\ref{queue-update});\\
    }
}
\caption{Online distributed LAD-TS algorithm}
\label{LAD-TS-algorithm}
\end{algorithm}

\textbf{Step 1.} It begins by initializing each element $X_{b}[n]$ in the array $X_{b}$ with a standard Gaussian distribution for each task $n\in\mathbb{N}_{b,t}$ and $b\in\mathcal{B}$ (Line 1). Then, for each BS $b \in \mathcal{B}$, LADNs, CNs, and TNs are initialized with random parameters $\tilde{\boldsymbol{\theta}}_{b}$, $\boldsymbol{\theta}_{b}$, $\{\boldsymbol{\omega}_{b,j}\}_{j=1,2}$, and $\{\tilde{\boldsymbol{\omega}}_{b,j}\}_{j=1,2}$, respectively (Line 2). The queue $q_{0,b'}$ and the experience pool $\mathcal{R}_{b'}$ at each BS $b' \in \mathcal{B}$ are initialized to 0 and empty $\emptyset$, respectively (Line 3).

\textbf{Step 2.} For each episode = $1, 2, \cdots, E$, the system environment is reset. Then, for each time slot $t \in \mathcal{T}$ and each task $n \in \mathbb{N}_{b,t}$ in a parallel way (Lines 6 - 8), the system state $\boldsymbol{s}_{b,n,t}$ and latent action probability $\boldsymbol{x}_{b,n,t,I}$ are observed from the environment and $X_{b}$, respectively (Line 9). Then, by utilizing the $\boldsymbol{s}_{b,n,t}$ and $\boldsymbol{x}_{b,n,t,I}$, and $I$ as input, the action $\boldsymbol{a}_{b,n,t}$ and $ \boldsymbol{x}_{b,n,t,0}$ are generated by the Actor (Line 10). Accordingly, the reward $r_{b,n,t}$ is calculated using Eqn. (\ref{Eq-reward}) (Line 11). Furthermore, the $\boldsymbol{x}_{b,n,t,I}$ is updated by equaling to $\boldsymbol{x}_{b,n,t,0}$ (Line 12). The next system state $\boldsymbol{s}_{b,n,t}^{\text{next}}$ and latent action probability $\boldsymbol{x}_{b,n,t,I}^{\text{next}}$ are observed from the environment and $X_{b}$, respectively (Line 13). The transition tuple $(\boldsymbol{s}_{b,n,t}$, $\boldsymbol{x}_{b,n,t,I}$, $\boldsymbol{a}_{b,n,t}, r_{b,n,t}$, $\boldsymbol{s}_{b,n,t}^{\text{next}}$, $\boldsymbol{x}_{b,n,t,I}^{\text{next}})$ is stored to the experience pool $\mathcal{R}_{b}$ (Line 14). 

\textbf{Step 3.} If the size $|\mathcal{R}_{b}|$ of experience pool exceeds 300, $K$ samples from $\mathcal{R}_{b}$ are extracted to sequentially update the model parameters $\{\boldsymbol{\omega}_{b,j}\}_{j=1,2}$, $\boldsymbol{\theta}_{b}$, $\alpha_{b}$, and $\{\tilde{\boldsymbol{\omega}}_{b,j}\}_{j=1,2}$ (Lines 15 - 17). After the completion of network training, the parameter $\tilde{\boldsymbol{\theta}}_{b}$ is updated by copying $\boldsymbol{\theta}_{b}$ (Line 18). Here, $|\mathcal{R}_{b}| > 300$ is set to control that at least $300$ history tuples have been stored, ensuring model training based on user-defined real-world conditions \cite{haarnoja2018soft}.

\textbf{Step 4.} At the end of each time slot, the queue $\boldsymbol{q}_{t}$ is updated according to the Eqn. (\ref{queue-update}) (Line 19). 

Finally, the near-optimal task allocation policy $\pi_{\tilde{\boldsymbol{\theta}}}$ is gradually learned through the above steps. Here, for each time slot $t$, the time complexity of Algorithm \ref{LAD-TS-algorithm} is a linear time complexity $O(N_{b,t})$ as proved in the following Theorem \ref{Theorem3}.

\begin{theorem}\label{Theorem3}
For each time slot $t$, the time complexity of Algorithm \ref{LAD-TS-algorithm} is a linear time complexity $O(N_{b,t})$.
\end{theorem}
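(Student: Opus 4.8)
The plan is to analyze Algorithm~\ref{LAD-TS-algorithm} by isolating the computation performed within a single time slot $t$ and showing that the total work scales linearly in the number of arriving tasks $N_{b,t}$. First I would observe that, for a fixed time slot $t$ and a fixed BS $b$, the inner \texttt{ForEach} loop over new arrival tasks $n \in \mathbb{N}_{b,t}$ (Lines 8--14) executes exactly $N_{b,t}$ iterations. The key claim is that each such iteration performs only a constant amount of work: observing the state $\boldsymbol{s}_{b,n,t}$ and latent probability $\boldsymbol{x}_{b,n,t,I}$ (Line 9), a single forward pass through the Actor to generate $\boldsymbol{x}_{b,n,t,0}$ and $\boldsymbol{a}_{b,n,t}$ (Line 10), the reward computation via Eqn.~(\ref{Eq-reward}) (Line 11), the array update $X_b[n]\leftarrow\boldsymbol{x}_{b,n,t,0}$ (Line 12), the next-state observation (Line 13), and the tuple insertion into $\mathcal{R}_b$ (Line 14). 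Each of these is an $O(1)$ operation with respect to $N_{b,t}$.

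The central step is to justify that the Actor inference in Line~10 is $O(1)$ rather than growing with the number of tasks. Here I would invoke the fixed architecture established in Section~\ref{sec4-1}: the Actor performs a bounded number $I$ of denoising steps, each consisting of a sinusoidal encoding, a concatenation, one MLP evaluation of fixed width, and a reparameterization step, followed by a single softmax and sampling operation. Since $I$ and the network dimensions are hyper-parameters fixed independently of $N_{b,t}$, one forward pass costs $O(I)=O(1)$ relative to the task count. I would treat this constant as absorbed into the big-$O$ notation, so the per-task cost remains $O(1)$.

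Next I would account for the conditional training block (Lines~15--18), which executes at most once per time slot and samples a fixed batch of $K$ tuples to run Eqns.~(\ref{Eq-omega-update})--(\ref{Eq-tildeomega-update}). Because $K$ and the network sizes are constants, this block contributes only $O(K)=O(1)$ per time slot, independent of $N_{b,t}$. Similarly, the queue update in Line~19 via Eqn.~(\ref{queue-update}) is performed once at the end of the slot and touches a fixed number $B$ of entries, hence $O(B)=O(1)$ with respect to $N_{b,t}$. Summing the $N_{b,t}$ constant-cost task iterations with the constant-cost training and queue-update overhead yields a total per-slot cost of $O(N_{b,t}) + O(1) = O(N_{b,t})$, establishing the claim.

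The main obstacle I anticipate is making precise what is held constant. The linear bound is in $N_{b,t}$, so I must be explicit that the denoising depth $I$, the batch size $K$, the number of edge servers $B$, and all network widths are treated as fixed constants; otherwise the Actor forward pass, the training step, or the queue update could introduce hidden dependence. A secondary subtlety is the \textbf{in parallel} qualifier on the loop over BSs (Line~7): I would clarify that the stated complexity $O(N_{b,t})$ is the per-BS workload, so the parallelism across $b\in\mathcal{B}$ does not alter the bound for an individual scheduler. Once these constants are fixed and the per-BS scope is made explicit, the amortization argument is routine and the linear-time conclusion follows directly.
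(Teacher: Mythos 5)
Your proposal is correct and follows essentially the same argument as the paper: the per-slot cost is dominated by the $N_{b,t}$ iterations of the task loop, each of constant cost because the Actor's denoising depth and network widths are fixed, while the BS loop runs in parallel and the training step does not scale with $N_{b,t}$ (the paper dismisses training as ``performed offline,'' whereas you bound it as $O(K)=O(1)$ per slot, a slightly more self-contained justification). Your explicit accounting of the queue update in Line 19 and of which quantities ($I$, $K$, $B$, network sizes) are held constant makes the argument tighter than the paper's, but it is the same decomposition and yields the same conclusion.
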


\begin{proof}
As shown in Algorithm \ref{LAD-TS-algorithm}, for each time slot $t \in \mathcal{T}$, there are two ``for'' loops orderly. In the first loop, since all BSs are executed in parallel, its time complexity is $O(1)$. In the second loop, the time complexity is determined by the number of tasks $N_{b,t}$, the solving time of ES selection, and the network training time. Furthermore, the time complexity of ES selection is $O(1)$ since the t-LADN model has been trained previously. Meanwhile, the network training is performed offline. Thus, the total time complexity of Algorithm \ref{LAD-TS-algorithm} is $O(N_{b,t})$ for each time slot $t$.
\end{proof}

\begin{table}[!t]
\caption{Default setting of environment parameter.}
\centering
\begin{tabular}{p{16pt}<{\centering}p{86pt}<{\centering}p{16pt}<{\centering}p{70pt}<{\centering}}
    \toprule
    Param. & Value & Param. & Value\\
    \midrule
    $B$ & 20 \cite{du2024diffusion} & $|\mathcal{T}|$ & 60 \cite{chu2023online}\\
    $N_{b,t}$ & [1, 50] \cite{chu2023online} &  $\Delta$ & 1.0 second \cite{tang2022deep} \\
    $\rho_{n}$ & [100, 300] cycles/bit \cite{fan2024collaborative} & $d_{n}$ & [2, 5] Mbits \cite{chu2023online} \\ 
    $v_{b,b',t}$ & [400, 500] Mbits/s \cite{fan2024collaborative} & $f_{b'}$ & [10, 50] GHz \cite{fan2024collaborative}  \\
    \bottomrule
\end{tabular}
\label{table3}
\end{table}

\section{Performance Evaluation}\label{sec5}
In this section, we develop a simulator in Python to evaluate the effectiveness of our method with insightful analysis. 

\begin{table}[!t]
\caption{Default setting of model parameters.}
\centering
\begin{tabular}{p{120pt}<{\centering}p{90pt}<{\centering}}
    \toprule
    Parameters & Value\\
    \midrule
     Hidden layers of actor and CN & 2 full connection (20, 20)\\
    \midrule
     Learning rate $\eta_{a}$ & 1e-4 \\
     Learning rate $\eta_{c}$ & 1e-3 \\          
     Learning rate $\eta_{\alpha}$ & 3e-4 \\ 
     Reward decay's factor $\gamma$ & 0.95 \\
     Soft updating's weight $\tau$ & 0.005 \\
     Batch size $K$ & 64 \\
     Denoising step $I$ & 5\\
     Entropy temperature $\alpha$ & 0.05\\ 
     Target entropy $\tilde{H}$ & -1\\
     Number of episodes $E$ & 60\\
     Experience pool's size $|\mathcal{R}|$ & 1000\\
     Optimizer & Adam\\
    \bottomrule
\end{tabular}
\label{table4}
\end{table}

\subsection{Setup}
\textbf{Environment Parameters.} The number of BSs ($B$) is set to $20$ \cite{du2024diffusion}. The number of tasks arriving at each BS during time slot $t$ ($N_{b,t}$) is randomly generated within the range $[1, 50]$ \cite{chu2023online}. The total number of time slots $|\mathcal{T}|$ is $60$, corresponding to a duration of $1$ minute, with each time slot $t$ lasting $1$ second. Task sizes $d_{n}$ are randomly chosen between $[2, 5]$ Mbits \cite{chu2023online}, while the size $\tilde{d}_{n}$ of the generated images is set between $[0.6, 1.0]$ Mbits, equivalent to a resolution of $512\times512$ pixels. The required image generation quality $z_{n}$ and computation resources $\rho_{n}$ are set within the ranges of $[1, 15]$ steps \cite{du2024diffusion} and $[100, 300]$ CPU cycles/step, respectively. The computational capacity of each ES, $f_{b'}$, ranges from $[10, 50]$ GHz \cite{fan2024collaborative}. The data transmission rates $v_{n,b',t}$ and $v_{b',n,t}$ are uniformly distributed between $[400, 500]$ Mbits/s \cite{fan2024collaborative}.

\textbf{Model Parameters.} We construct the DRL model using the PyTorch framework. The LADNs, CNs, and TNs are all configured as an MLP, each comprising one input layer, two hidden layers with $20$ neurons each, and one output layer. Consistent with the findings in \cite{du2024diffusion}, we set the reward decay factor $\gamma$ to $0.95$. The experience replay buffer $\mathcal{R}_{b}$ at each ES has a capacity of $1000$. The learning rates for the LADNs, CNs, and $\alpha_{b}$ are $1e-4$, $1e-3$, and $3e-4$, respectively. We use a batch size of $64$ and the Adam optimizer. The initial value of $\alpha_{b}$ is $0.05$, and the target entropy $\tilde{H}$ is $-1.0$. Lastly, the weight coefficient $\tau_{b}$ is set to $0.005$ and the denoising step $I$ is set to $5$ according to our experiments as shown in Fig. \ref{Fig8}. 

\subsection{Baselines}
We use two well-established DRL baselines, a diffusion-based DRL baseline, and a heuristic optimal baseline to compare performance with our method as follows.

\textbf{DQN-TS.} The DQN \cite{mnih2015human} is a well-known DRL method that has been widely applied to various fields. In our experiments, we faithfully implement the DQN-based Task Scheduling (DQN-TS) method as a baseline with the same setup.

\textbf{SAC-TS.} The SAC \cite{haarnoja2018soft} is the state-of-the-art DRL-based method. We further implement the SAC-based Task Scheduling (SAC-TS) method as another baseline with the same setup.

\textbf{D2SAC-TS} \cite{du2024diffusion}. We use the D2SAC-TS as the baseline which is the state-of-the-art task scheduling method based on the diffusion model and DRL.

\textbf{Opt-TS.} An optimal method selects the most suitable ES to process each task by enumerating all action spaces. Opt-TS provides the upper bound on the performance of AIGC services, but it is infeasible since the scheduler has no way of knowing in advance the compute and network resources available for ESs in the actual edge system.

\begin{figure}[!t]
    \centering
    \includegraphics[width=\linewidth]{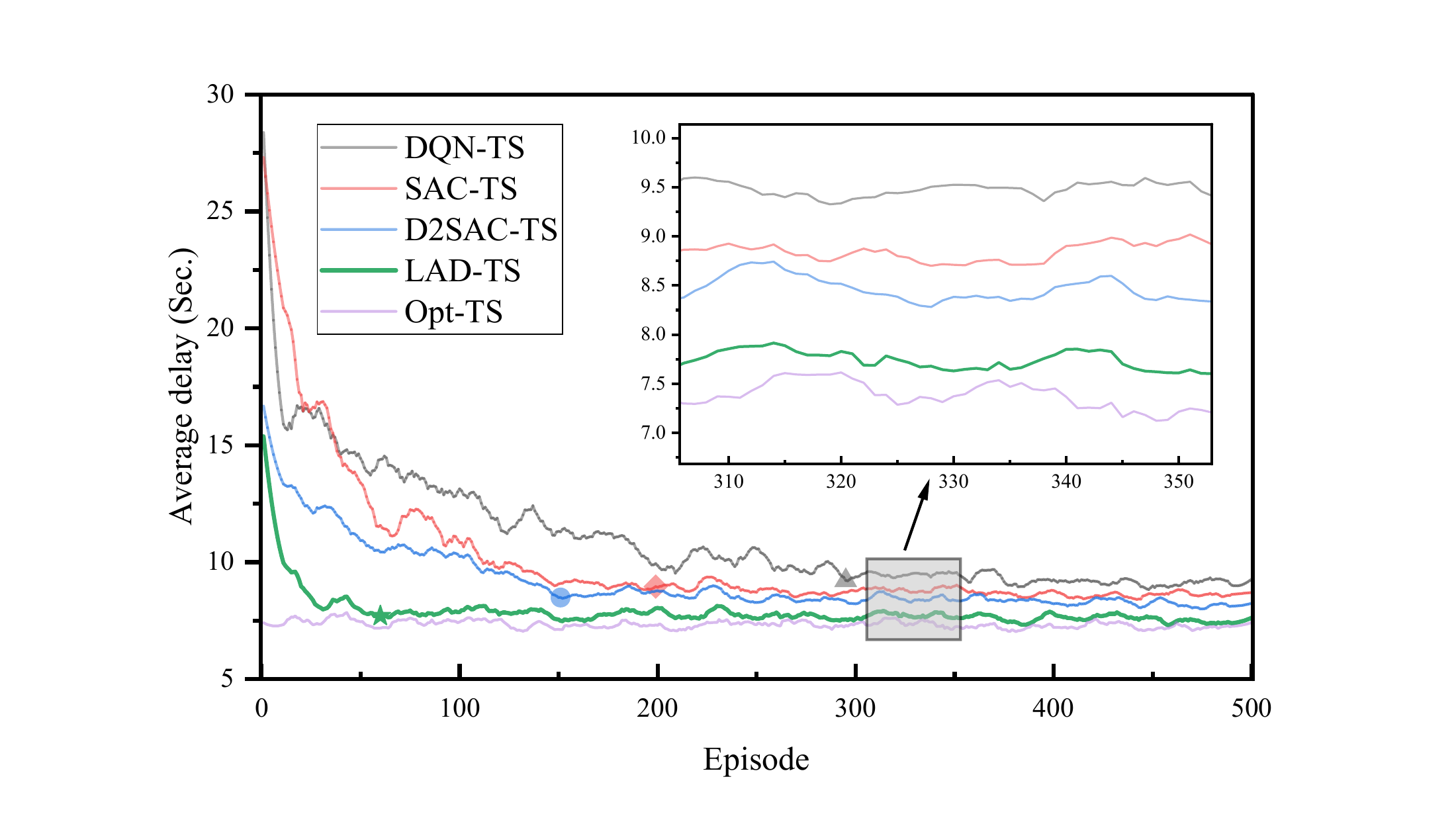}
    \caption{The average service delays of our LAD-TS method and baselines with increasing training episodes.}
    \label{Fig5}
\end{figure}

\subsection{Results}
We first assess the learning performance of our method against four baselines, as depicted in Fig. \ref{Fig5}. The $x$-axis represents the episode count, while the $y$-axis indicates each episode's average service delay across tasks and time slots. Subsequently, we compare the average service delay of our method with the baselines by varying environment parameters as illustrated in Figs. \ref{Fig6} and \ref{Fig7}, and analyze the model parameters of denoising step and entropy temperature impact on the service delay of our method as shown in Fig. \ref{Fig8}. We perform 50 independent experiments to achieve these results.

\textbf{Learning Performance Evaluation.} To gain insights into the learning performance of our LAD-TS method, we compare its performance with the baselines, as illustrated in Fig. \ref{Fig5}. With increasing training episodes, this figure shows that our LAD-TS method's delay initially decreases and then reaches convergence after only \textbf{60} training episodes. In contrast, the delays for the DQN-TS, SAC-TS, and D2SAC-TS methods require 300,  200, and 150 training episodes to converge, respectively. Consequently, the LAD-TS method reduces the number of training episodes by $\textbf{80\%}$ for DQN-TS, $\textbf{70\%}$ for SAC-TS, and $\textbf{60\%}$ for D2SAC-TS. This efficiency is attributed to the LAD-TS method's capability to make multi-step diffusion decisions with historical action probability guidance instead of Gaussian noise, accelerating the convergence process. On the other hand, from the subfigure in Fig. \ref{Fig5}, the average service delays for the DQN-TS, SAC-TS, D2SAC-TS, and LAD-TS methods stabilize at about $9.5$, $8.9$, $8.4$, and \textbf{7.7} seconds, respectively. These results show that our LAD-TS method achieves the lowest delay and significantly outperforms the delays of DQN-TS, SAC-TS, and D2SAC-TS methods. Notably, our LAD-TS method closely approximates the heuristic optimal method's (Opt-TS) delay of $7.4$ seconds. This trend is observed because our LAD-TS method progressively improves policy learning by utilizing the diffusion model’s condition generation capability and the DRL’s environment interaction ability, thereby approaching an optimal policy.

\begin{figure}[!t]
    \centering 
    \subfigure[Varying the number of tasks.]{
    \label{Fig6.sub.1}
    \includegraphics[width=0.8\linewidth]{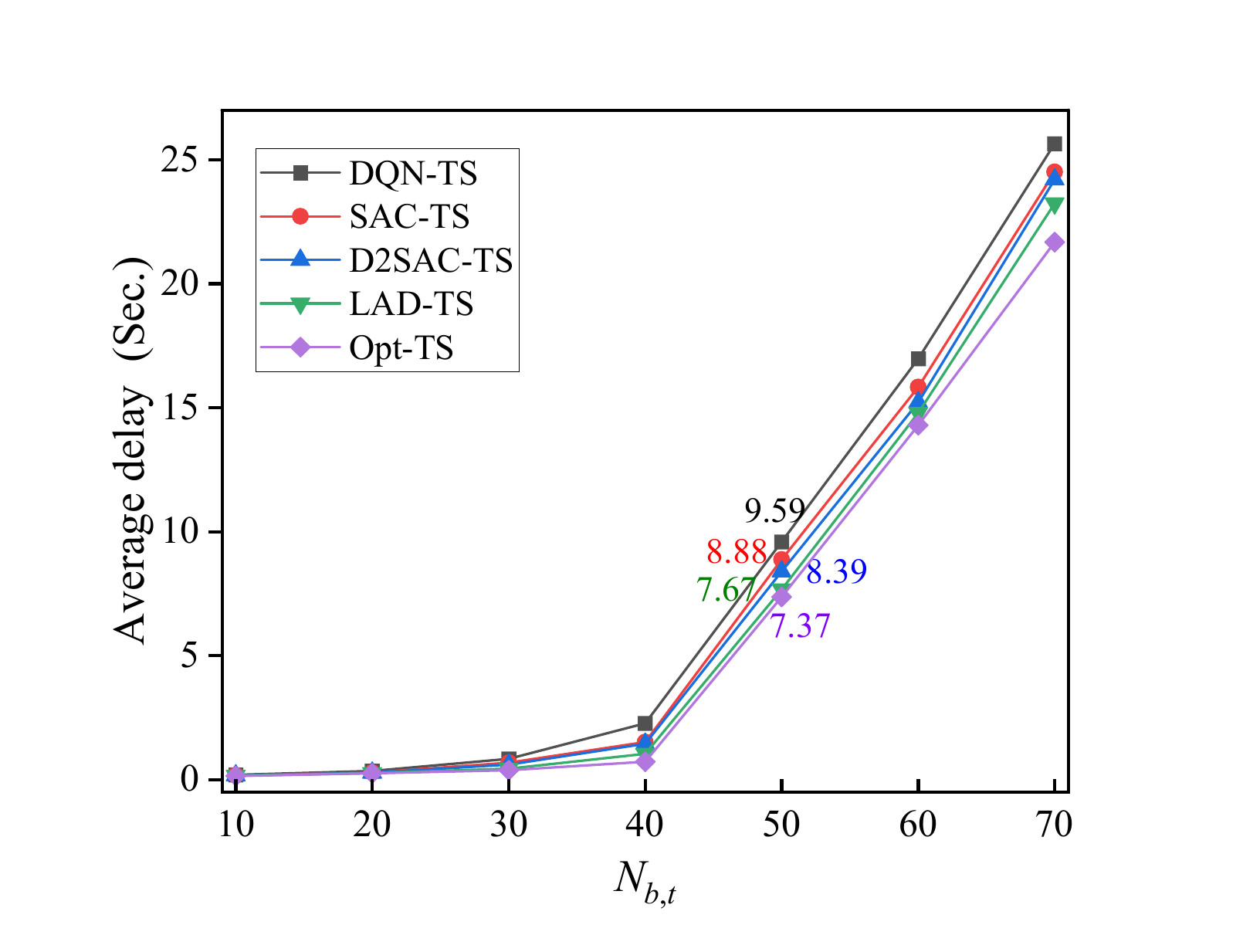}}
    
    \subfigure[Varying the ESs' capacities.]{
    \label{Fig6.sub.2}
    \includegraphics[width=0.8\linewidth]{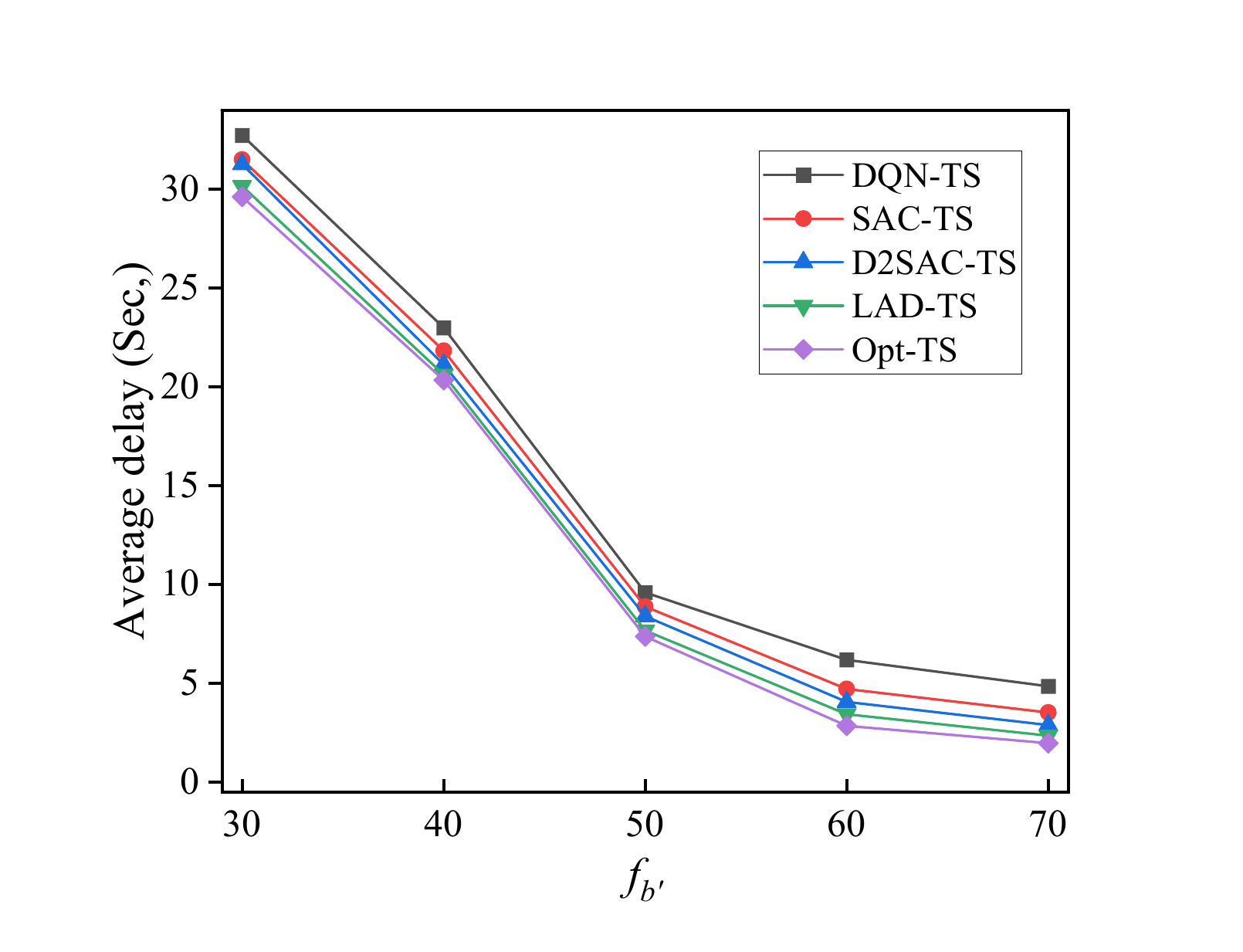}}
    \caption{The comparison results of our method and baselines by varying the number of tasks and the ESs' capacities.}
    \label{Fig6}
\end{figure}

\textbf{Performance Comparison.} Figs. \ref{Fig6} and \ref{Fig7} present the average service delay comparison between our LAD-TS method and four baseline approaches. Specifically, Fig. \ref{Fig6.sub.1} demonstrates how the service delays for all four methods escalate as the upper bound $N_{b,t}$——representing the number of tasks arriving at each BS——increases from $10$ to $70$. This escalation is intuitive, i.e., a higher number of tasks leads to an increased workload being offloaded to the ESs, thereby elevating the service delay. Nonetheless, our LAD-TS method consistently exhibits lower service delays than the DQN-TS, SAC-TS, and D2SAC-TS methods, mirroring the optimal results as $N_{b,t}$ increases from $10$ to $70$. For instance, when $N_{b,t}$ equals 50, the average service delay of our LAD-TS method is only \textbf{7.67} seconds which is significantly lower than those (\textbf{9.59}, \textbf{8.88}, and \textbf{8.39} seconds) of the DQN-TS, SAC-TS, and D2SAC-TS methods by \textbf{20.02\%}, \textbf{13.63\%}, and \textbf{8.58\%}, respectively. Also, this result is close to the Opt-TS method's service delay of 7.37 seconds. 

\begin{figure}[!t]
    \centering 
    \subfigure[Varying the AIGC qualities.]{
    \label{Fig7.sub.1}
    \includegraphics[width=0.8\linewidth]{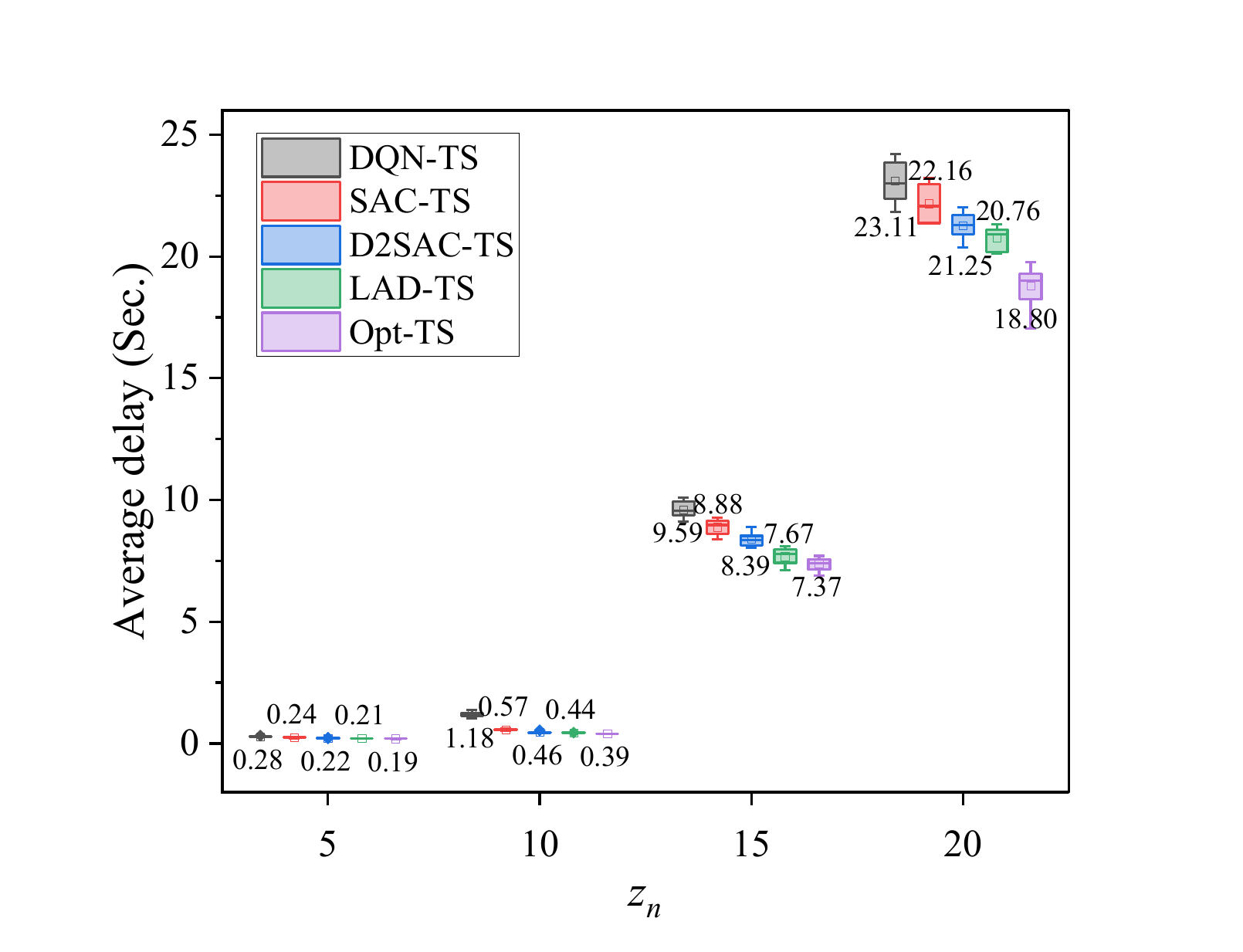}}
    
    \subfigure[Varying the number of BSs.]{
    \label{Fig7.sub.2}
    \includegraphics[width=0.8\linewidth]{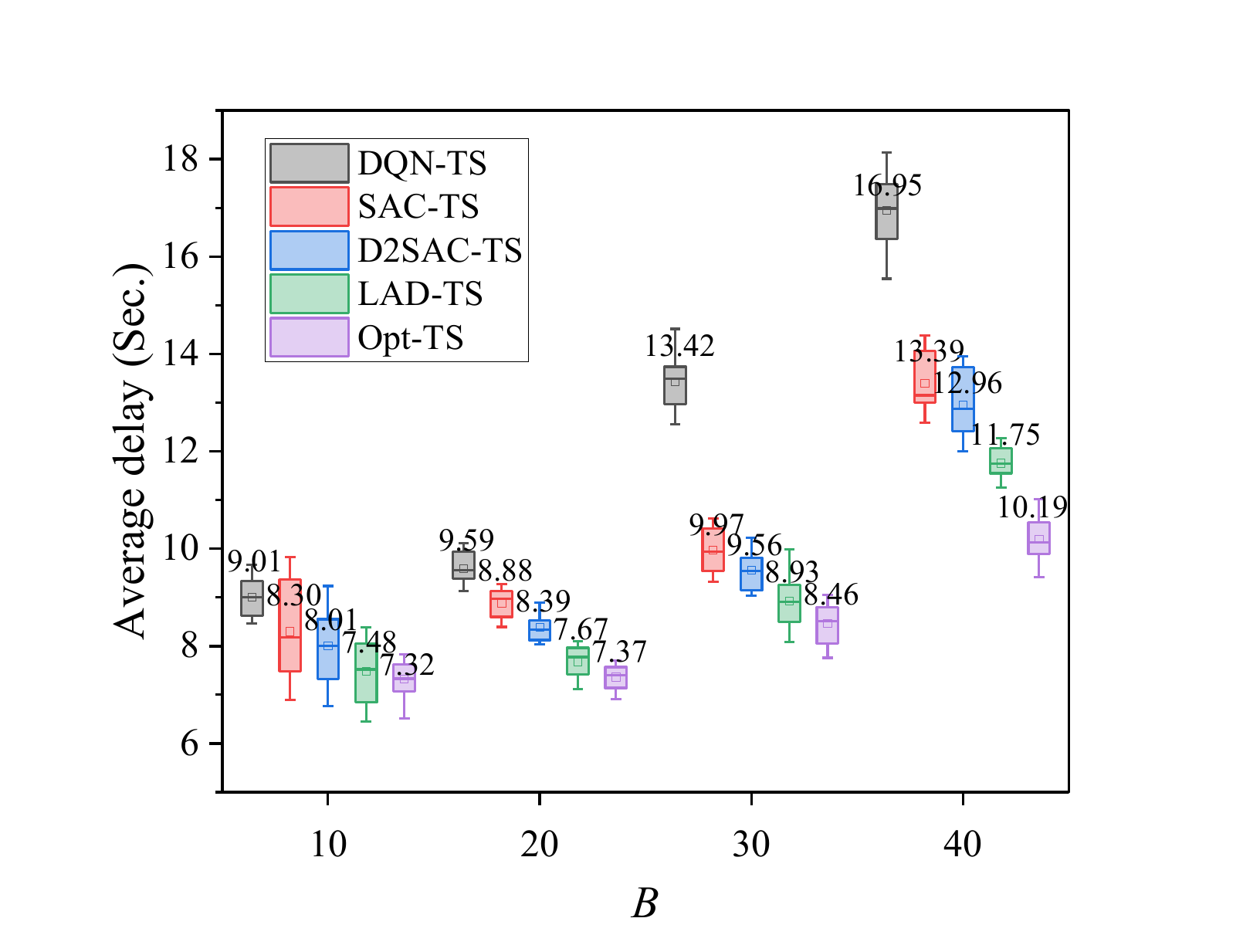}}
    \caption{The comparison results of our method and baselines by varying the AIGC qualities and the number of BSs.}
    \label{Fig7}
\end{figure}

Furthermore, we explore the impact of varying the upper bound $f_{b'}$ of ESs' capacities on service delay, as depicted in Fig. \ref{Fig6.sub.2}. From Fig. \ref{Fig6.sub.2}, we can see that the average service delays for all methods decrease with increasing $f_{b'}$ within the range $[30, 70]$. However, our LAD-TS method consistently outperforms the DQN-TS, SAC-TS, and D2SAC-TS methods across the $f_{b'}$ variations.

The results presented in Fig. \ref{Fig7.sub.1} are achieved by varying the quality demand $z_{n}$ of image generation. Echoing the trends observed in Fig. \ref{Fig6.sub.1}, as $z_{n}$ increases from $5$ to $20$, our LAD-TS method consistently surpasses the DQN-TS, SAC-TS, and D2SAC-TS methods in minimizing service delay, closely aligning with the optimal results. Notably, at a $z_{n}$ value of $20$, the average service delay for our LAD-TS method is recorded at $18.80$ seconds. In contrast, the average service delays for the DQN-TS, SAC-TS, and D2SAC-TS methods are recorded at $23.11$, $21.25$, and $20.76$ seconds, respectively, which are higher than that of our LAD-TS method by $22.92\%$, $13.03\%$, and $10.42\%$ respectively.

\begin{figure}[!t]
    \centering 
    \subfigure[Our method's denoising step impact on service delay.]{
    \label{Fig8.sub.1}
    \includegraphics[width=0.9\linewidth]{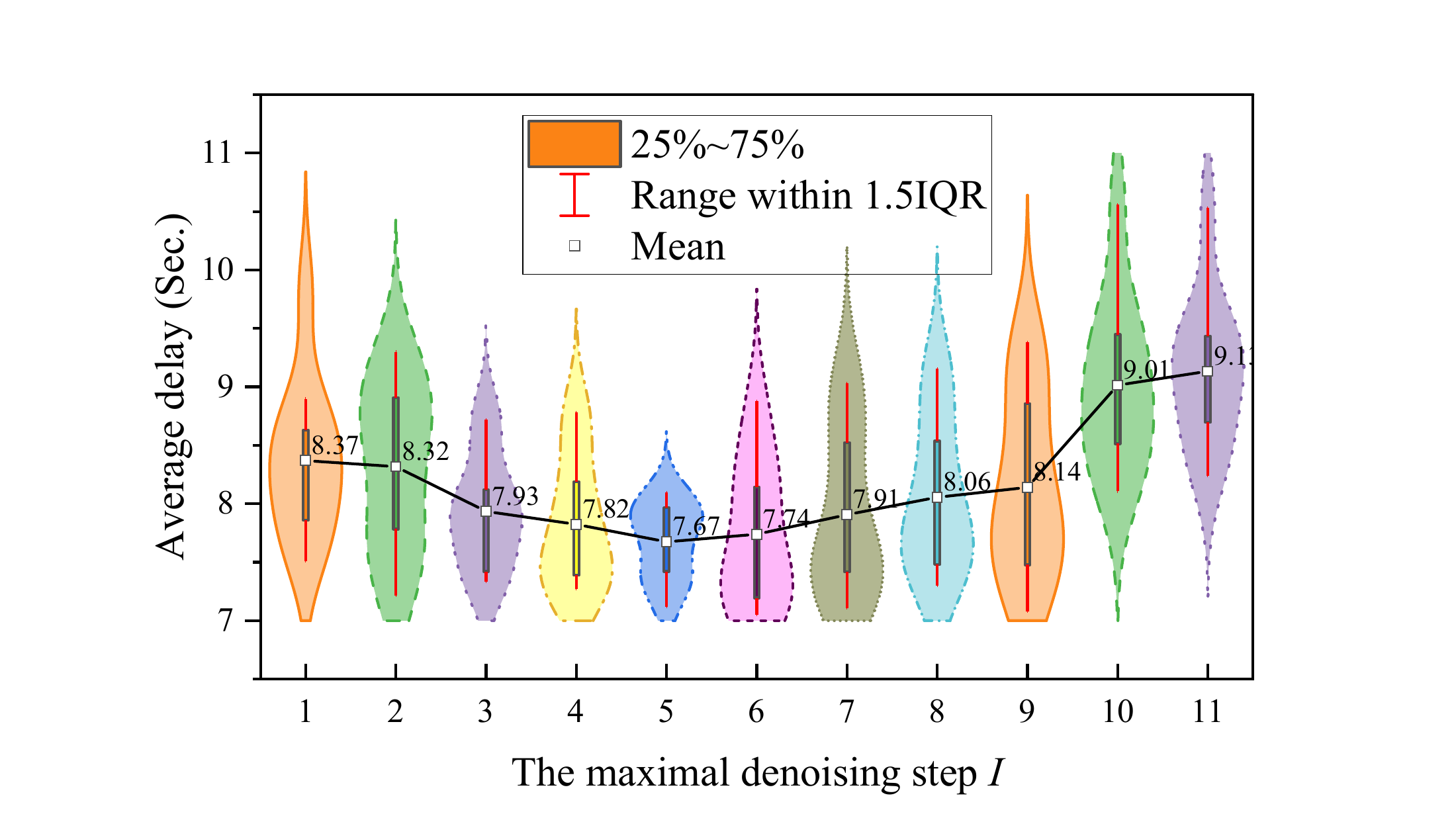}}
    
    \subfigure[Our method's entropy temperature impact on service delay.]{
    \label{Fig8.sub.2}
    \includegraphics[width=0.9\linewidth]{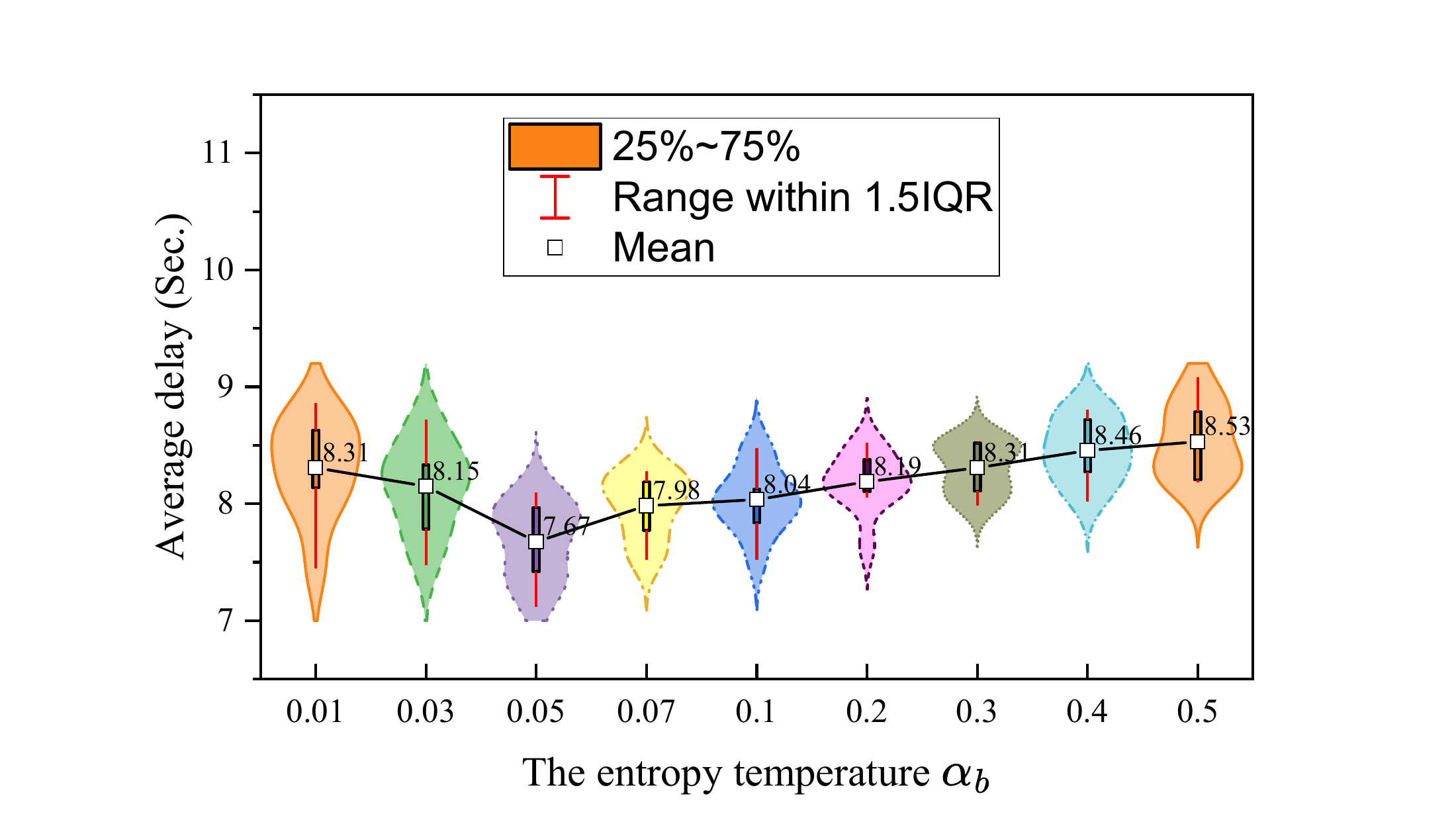}}
    \caption{The service delay of our method by varying denoising step $I$ and entropy temperature $\alpha_{b}$.}
    \label{Fig8}
\end{figure}

Besides, we also test the impact of varying the number $B$ of BSs on service delay, as illustrated in Fig. \ref{Fig7.sub.2}. Fig. \ref{Fig7.sub.2} reveals that service delays escalate for all methods as $B$ increases. Nonetheless, our LAD-TS method consistently outperforms the DQN-TS, SAC-TS, and D2SAC-TS methods across these variations. Moreover, as $B$ grows, the relative delay advantage of LAD-TS becomes more pronounced. When $B$ is set to $40$, the LAD-TS method achieves an average service delay of 11.75 seconds which reduces the service delays (16.95, 13.39, and 12.96 seconds) of the DQN-TS, SAC-TS, and D2SAC-TS methods by $\textbf{30.67\%}$, $\textbf{12.25\%}$, and $\textbf{9.34\%}$, respectively. This superior performance is due to the LAD-TS method's ability to leverage the guidance from latent actions, which effectively optimizes the action policy to accommodate an increasing number of BSs with incoming tasks.

These findings prove that our LAD-TS method significantly outperforms existing state-of-the-art methods in reducing the service delay between $\textbf{8.58\%}$ to $\textbf{30.67\%}$ while closely approximating optimal outcomes.

\textbf{Key Parameter Analysis.} Two key parameters of denoising step $I$ and entropy temperature $\alpha_{b}$ are introduced to ensure computational efficiency and performance in our LAD-TS method. Specifically, setting the appropriate values for $I$ and $\alpha_{b}$ involves a trade-off between computational efficiency and performance. Fig. \ref{Fig8.sub.1} presents the service delay of our method as the denoising step increases. Generally, a large value of $I$ is recommended to ensure image generation quality in the SD model, but this also leads to longer computation times \cite{ho2020denoising}. However, from Fig. \ref{Fig8.sub.1}, we can see that when $I$ equals 5, our LAD-TS method achieves the lowest service delay and maintains a small error distribution.

In DRL, the action entropy regularization term is introduced to balance exploration and exploitation \cite{haarnoja2018soft}, with its behavior controlled by the entropy temperature value $\alpha_{b}$. To assess its impact on the performance of our LAD-TS method, we evaluate the average service delay by varying the entropy temperature $\alpha_{b}$, as shown in Fig. \ref{Fig8.sub.2}. The results reveal that as $\alpha_{b}$ increases from 0.01 to 0.5, the average service delay first decreases and then begins to rise. The optimal value for minimizing service delay is found to be $\alpha = 0.05$, which strikes a balance between exploration and exploitation. Specifically, lower values of $\alpha_{b}$ lead to overly greedy behavior, restricting the agent from exploring actions with higher uncertainty, which can prevent the discovery of more effective actions. On the other hand, higher values of $\alpha$ encourage excessive randomness, potentially hindering the agent’s ability to converge on an optimal policy efficiently.

The above findings suggest that the moderate values of $\alpha = 0.05$ and $I = 5$ facilitate faster convergence and better performance by promoting effective exploration-exploitation and computational-performance trade-offs. This balance is crucial for ensuring the efficiency and optimality of the task scheduling policy learned by our LAD-TS method.

\section{Implementation and Validation}\label{sec6}
In this section, we develop a prototype system to implement our DEdgeAI as shown in Fig. \ref{Fig9}. The practicality of our LAD-TS method is further validated in the prototype system. The test-results are presented in Table \ref{table2} and Fig. \ref{Fig11}.

\begin{figure}[!t]
    \centering
    \includegraphics[width=0.7\linewidth]{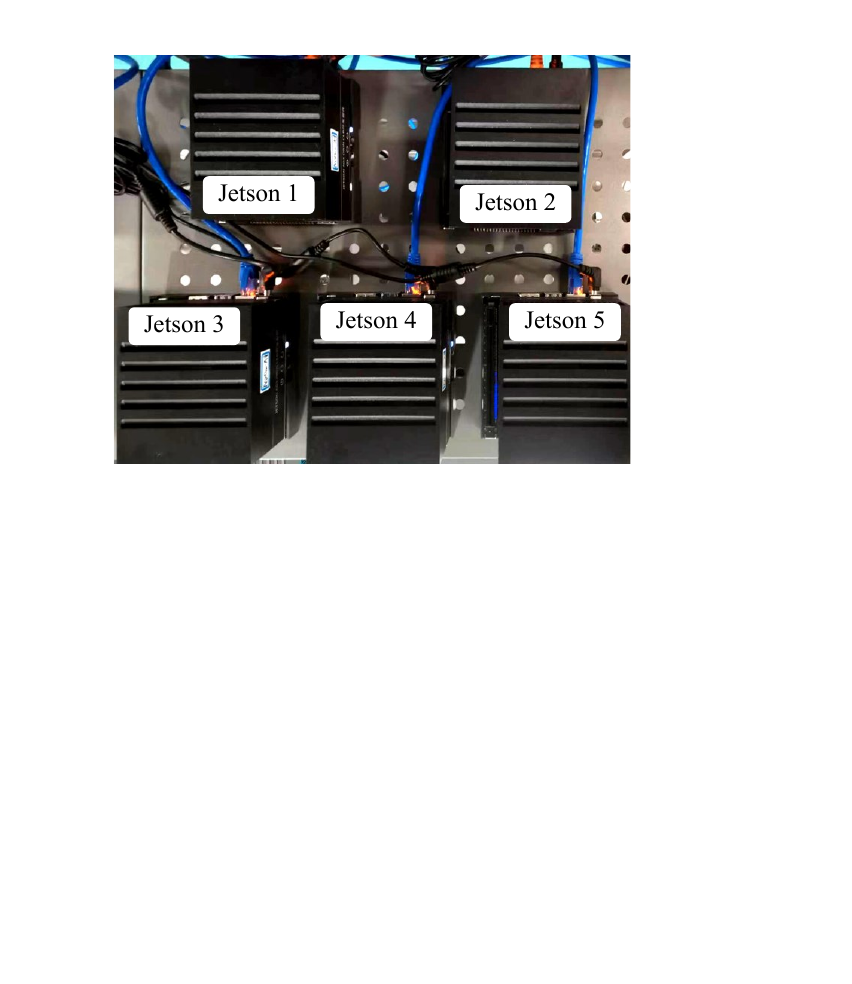}
    \caption{DEdgeAI is implemented on five Jetsons.}
    \label{Fig9}
\end{figure}

\subsection{System Implementation and Algorithm Deployment}
As depicted in Fig. \ref{Fig9}, our DEdgeAI system is implemented on five Jetson AGX Orin devices. Each device is equipped with the NVIDIA Ampere architecture GPU, featuring 2048 CUDA cores and 64 Tensor Cores. These devices serve as the ESs in our DEdgeAI system. The scheduler integrated with our Algorithm \ref{LAD-TS-algorithm} is also deployed on each Jetson. The devices are interconnected through a wired Gigabit local area network, enabling the DEdgeAI system to process incoming AIGC requests collaboratively in a distributed manner. 
 
We select the text-to-image task as a representative example to validate our LAD-TS method in the DEdgeAI system. Specifically, we first observe that the original SD3 medium model has 8 billion parameters \cite{esser2024scaling}. Moreover, it uses three pretrained text encoders—— OpenCLIP-ViT/G, CLIP-ViT/L, and T5xxl——to encode text representations, and an improved autoencoding model to encode image tokens, which require large memory and leads to a low inference efficiency for edge devices. To address these issues, we then implement the reSD3-m model by removing the T5xxl encoder of the original SD3 medium model \cite{sd3medium}. Finally, we deploy the reSD3-m model into each Jetson for providing text-to-image services.

\begin{figure}[!t]
    \centering
    \includegraphics[width=0.8\linewidth]{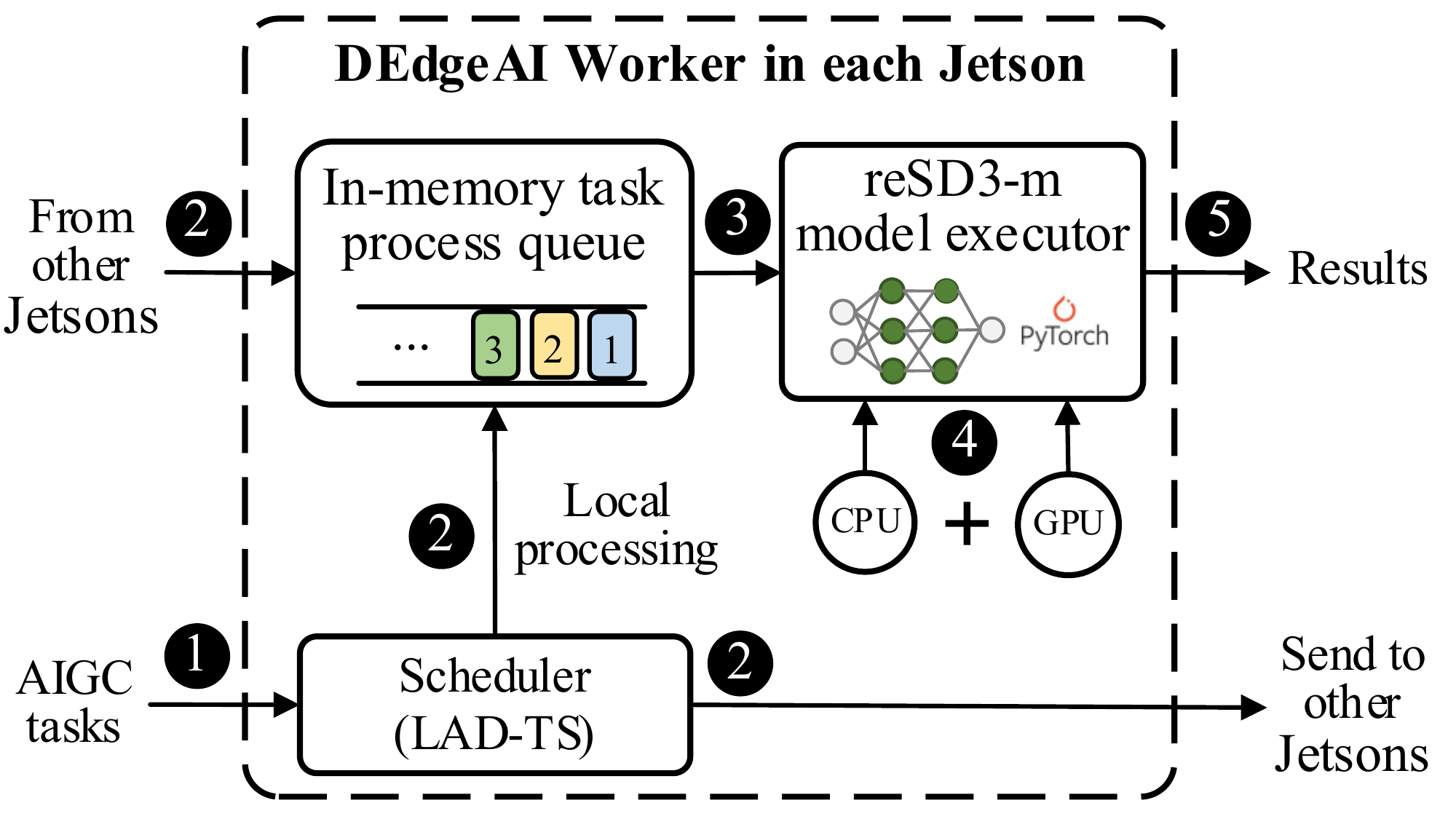}
    \caption{An instance of DEdgeAI worker in each Jetson.}
    \label{Fig10}
\end{figure}

\subsection{Experimental Design}
\textbf{Dataset}. We utilize the publicly available Flickr8k dataset. In particular, the text labels of the Flickr8k dataset are extracted as the input of AIGC tasks.

\textbf{DEdgeAI Worker}. As shown in Fig. \ref{Fig10}, for each Jetson device $b$ at time slot $t$, we randomly select $N_{b,t}$ text labels from the dataset to prompt AIGC tasks (\ding{182}). The scheduler on each Jetson device then employs our LAD-TS algorithm to offload these tasks so they can be processed successfully (\ding{183}, \ding{184}, and \ding{185}). Finally, the results are achieved (\ding{186}). The variables $d_{n}$ and $\tilde{d}_{n}$ represent the actual lengths of the task's input (a text or an image) and the processed output (an image), respectively. We calculate the service delay as the elapsed time from initiating task transmission to receiving the processed result.

\begin{table}[!t]
    \setlength{\tabcolsep}{3pt}
    \caption{The experimental results of the total generation delay of our DEdgeAI system and existing representative platforms under different numbers of AIGC tasks.}
    \centering
    \renewcommand{\arraystretch}{1.2}  
    \begin{threeparttable}
    \resizebox{\columnwidth}{!}{
    \begin{tabular}{m{1.2cm}<{\centering}m{1.15cm}<{\centering}m{0.5cm}<{\centering}m{0.7cm}<{\centering}m{0.8cm}<{\centering}m{0.9cm}<{\centering}m{1.3cm}<{\centering}} 
        \toprule
        \multirow{3}{0.9cm}{\centering Platform or system} & \multirow{3}{0.8cm}{\centering Model} & \multicolumn{4}{c}{Median generated delay (Sec.)} & \multirow{2}{1.3cm}{\centering Price per 1K images (USD)}\\
        \cline{3-6}        
         & &$|\mathcal{N}|$ =1 &$|\mathcal{N}|$ =100 &$|\mathcal{N}|$ =500 &$|\mathcal{N}|$= 1000 & \\
        \midrule
            Midjourney & Midjour ney v6 & 75.9 & 7590.0 &37950.0 &75900.0 & \$66.00\\
            OpenAI  & DALL$\cdot$E3 & 14.7 & 1470.0 &7350.0 &14700.0 &\$40.00\\
            Replicate & SD1.5 & 32.9 &3290.0 & 16450.0 &32900.0 &\$8.56\\
            Deepinfra & SD2.1 & 12.7 &1270.0  & 6350.0 & 12700.0 & \$3.76\\            
            Stability.AI & SD3 & \textbf{5.4} &540.0 &2700.0 &5400.0 &\$65.00\\
            DEdgeAI (Ours) & reSD3-m & 18.3 & \textbf{382.4} & \textbf{1921.5} & \textbf{3895.4} & \textbf{Free}\\
        \bottomrule
    \end{tabular}
    }
    \begin{tablenotes} 
        \footnotesize     
        \item[$\star$] Note that the data of Midjourney \cite{midjourney}, OpenAI \cite{dalle}, Replicate \cite{replicate}, Deepinfra \cite{deepinfra}, and Stability.AI \cite{stablityai} platforms are achieved from https://artificialanalysis.ai/text-to-image.  
      \end{tablenotes} 
    \end{threeparttable}
    \label{table2}
\end{table}

\subsection{Test-bed Results}
We compare the total generation delay of our DEdgeAI system with that of existing state-of-the-art platforms across varying task counts, as detailed in Table \ref{table2}. The results indicate that for a single task, DEdgeAI exhibits a higher image generation delay than OpenAI, Deepinfra, and Stability.AI, due to the comparatively lower AI performance of the edge Jetson device used in our DEdegAI. However, our DEdgeAI outperforms Midjourney and Replicate even with a single task. Notably, as the number of AIGC tasks increases to greater than 100, DEdgeAI always achieves the lowest image generation delay relative to all five platforms. Moreover, when the number of AIGC tasks equals 100, DEdgeAI reduces the service delays by $\textbf{94.96\%}$, $\textbf{73.98\%}$, $\textbf{88.37\%}$, $\textbf{69.89\%}$, and $\textbf{29.18\%}$ compared to Midjourney, OpenAI, Replicate, Deepinfra, and Stability.AI, respectively. This is because both LAD-TS and parallel processing are designed in our DEdgeAI system, significantly reducing service delays for large task requests. Additionally, the Midjourney, OpenAI, Replicate, Deepinfra, and Stability.AI platforms incur costs of $\$66.00$, $\$40.00$, $\$8.56$, $\$3.76$, and $\$65.00$ for generating $1000$ images, respectively, but our DEdgeAI offers a generation-free solution as it can be self-developed and deployed.

We test the memory occupations of the reSD3-m model and the original SD3 medium model in our DEdgeAI system. Experimental results show that using the original SD3 medium model occupies about 40 GB memory. However, when the reSD3-m model is used, the DEdgeAI memory is only occupied by about \textbf{16 GB}, reducing the memory using by \textbf{60\%} compared to the original SD3 medium model.

We also examine the image generation quality of our DEdgeAI system, with selective results depicted in Fig. \ref{Fig11}. Although these results have a lower quality compared to the SD3 model, they satisfactorily meet the task requirements, further confirming the practical applicability of our LAD-TS method in real-world edge systems. However, the image generation quality is beyond the primary scope of this study. 

\subsection{Scalability Discussion}
Our LAD-TS method presents the integration of the diffusion model and the SAC framework, improving task scheduling policy. However, the LAD-TS could be incorporated into other DRL methods, e.g., DQN. Additionally, although our DEdgeAI system demonstrates a text-to-image application in the test-bed results, it also can adapt to many contemporary applications, such as text-to-text and text-to-video. These analyses led us to design a general latent action diffusion paradigm for DRL and extend our DEdgeAI to more AIGC applications in future work.  

\begin{figure}[!t]
    \centering
    \includegraphics[width=0.98\linewidth]{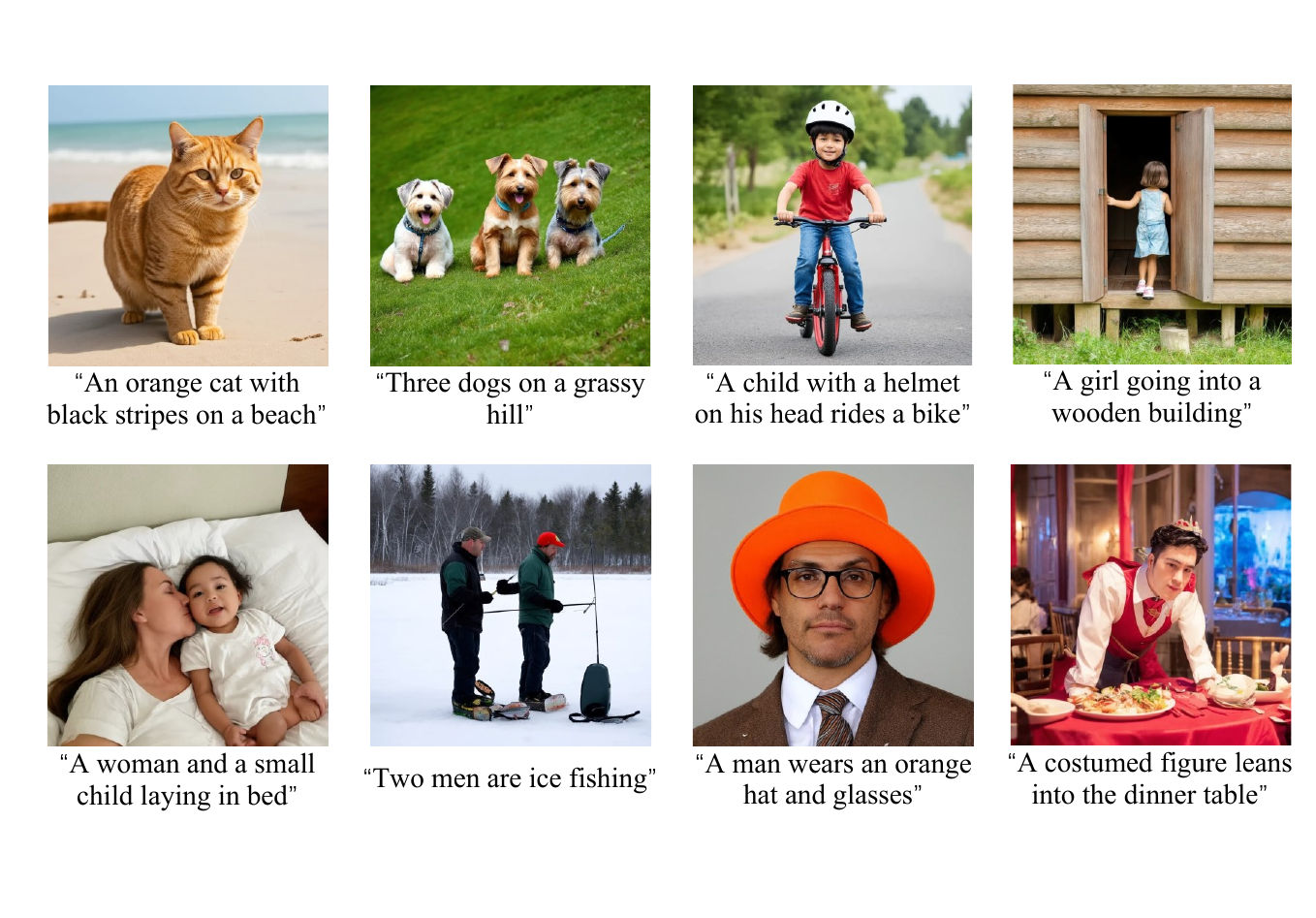}
    \caption{The partial experimental results for AIGC tasks by our DEdgeAI system. Intuitively, these results match the requirements of the task well. Importantly, the total image generation delay of our DEdgeAI system is significantly lower than that of existing platforms as shown in Table \ref{table2}.}
    \label{Fig11}
\end{figure}

\section{Conclusion}\label{sec7}
In this paper, we address the distributed AIGC challenge within collaborative edge environments by formulating it as an online ILP problem focused on minimizing task service delays for a real AIGC service. We propose a novel LAD-TS method for task scheduling by incorporating the diffusion model and DRL technique. Moreover, we design a latent action diffusion strategy to optimize task scheduling policy with the proof of probability derivation, improving the AIGC QoE and training time. We implement the LAD-TS as an online distributed algorithm that is characterized by linear time complexity. Furthermore, we develop the DEdgeAI system with our LAD-TS method and implement a refined AIGC service deployment. Extensive simulations underscore our method's superiority, achieving service delay reductions between $8.58\%$ and $30.67\%$ while closely approximating optimal outcomes. Also, LAD-TS significantly decreases the required training episodes by at least $60\%$. Empirical tests on the DEdgeAI system reveal at least a $29.18\%$ improvement in service delay and a $60\%$ reduction in memory occupation over existing representative AIGC platforms. This research represents a meaningful advancement in distributed edge systems for AIGC services. 

\bibliographystyle{IEEEtran}  % bibliography style
\bibliography{references} % bibliography references

\begin{IEEEbiography}
[{\includegraphics[width=1in,height=1.25in,clip,keepaspectratio]{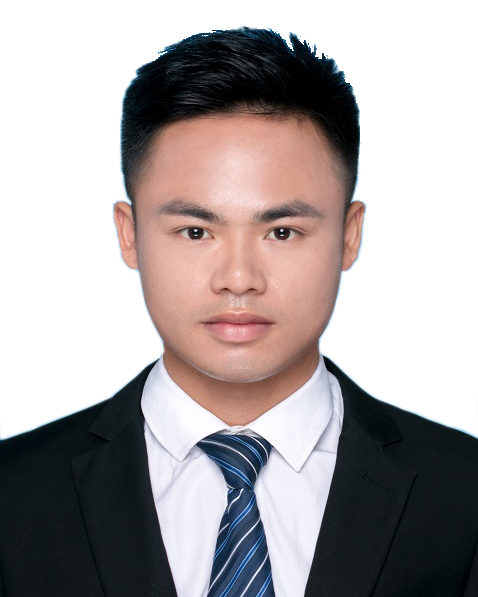}}]
{Changfu Xu}
(Student Member, IEEE) received the B.S. degree in Communication Engineering and the M.S. degree in Software Engineering from Jiangxi University of Finance and Economics in 2015 and 2018, respectively.
He is currently pursuing a Ph.D. degree in Computer Science and Technology from BNU–HKBU United International College and Hong Kong Baptist University. His main research interests include edge computing and AIGC. He has published more than 10 papers. He won the Best Paper Runner-up Award of IEEE/ACM IWQoS 2024.
\end{IEEEbiography}

\begin{IEEEbiography}
[{\includegraphics[width=1in,height=1.25in,clip,keepaspectratio]{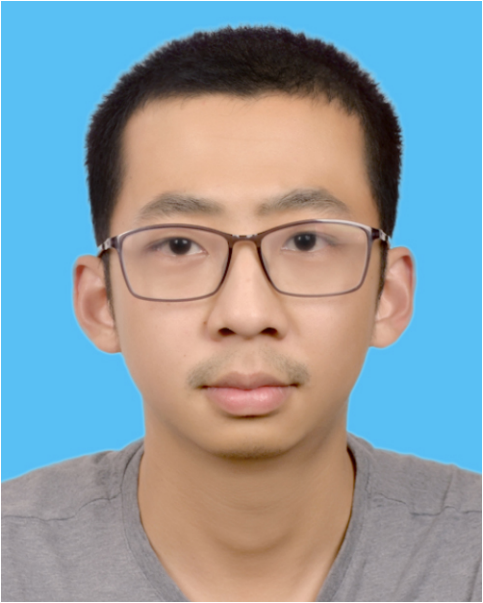}}]
{Jianxiong Guo}
(Member, IEEE) received his Ph.D. degree from the Department of Computer Science, University of Texas at Dallas, Richardson, TX, USA, in 2021, and his B.E. degree from the School of Chemistry and Chemical Engineering, South China University of Technology, Guangzhou, China, in 2015. He is currently an Associate Professor with the Advanced Institute of Natural Sciences, Beijing Normal University, and the Guangdong Key Lab of AI and Multi-Modal Data Processing, BNU-HKBU United International College, Zhuhai, China. He is a member of IEEE/ACM/CCF. He has published more than 80 papers and has been a reviewer in famous international journals/conferences. His research interests include social networks, algorithm design, data mining, IoT applications, blockchain, and combinatorial optimization.
\end{IEEEbiography}

\begin{IEEEbiography}
[{\includegraphics[width=1in,height=1.25in,clip,keepaspectratio]{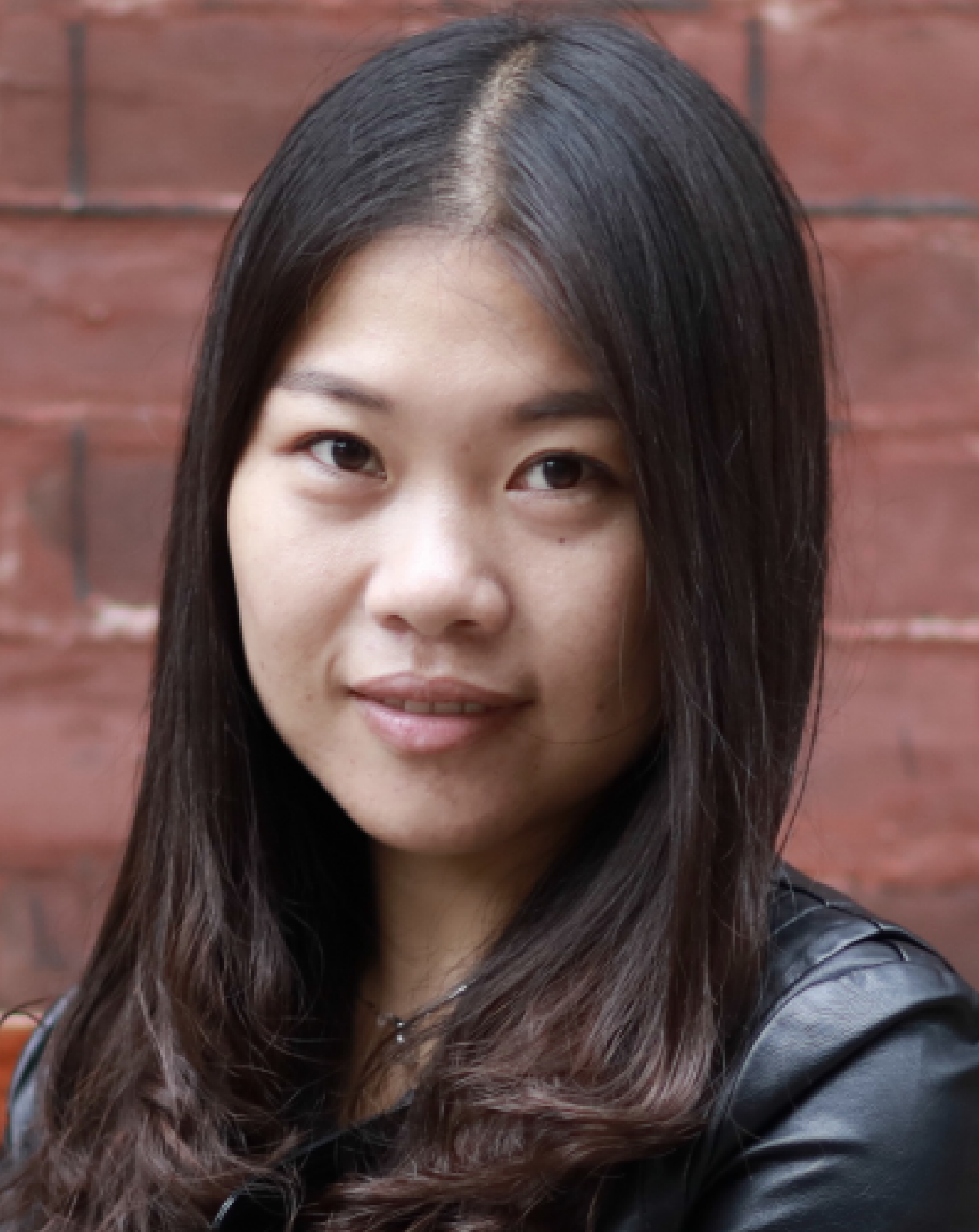}}]
{Wanyu Lin}
(Member, IEEE) received the B.Engr. degree from the School of Electronic Information and Communications, Huazhong University of Science and Technology, Wuhan, China, in 2012, the M.Phil. degree from the Department of Computing, The Hong Kong Polytechnic University, Hong Kong, in 2015, and the Ph.D. degree from the Department of Electrical and Computer Engineering, University of Toronto, Toronto, ON, Canada, in 2020. Her research interests include AI for science and trustworthy machine learning. Dr. Lin has served as an Associate Editor for IEEE TRANSACTIONS ON NEURAL NETWORKS AND LEARNING SYSTEMS.
\end{IEEEbiography}

\begin{IEEEbiography}
[{\includegraphics[width=1in,height=1.25in,clip,keepaspectratio]{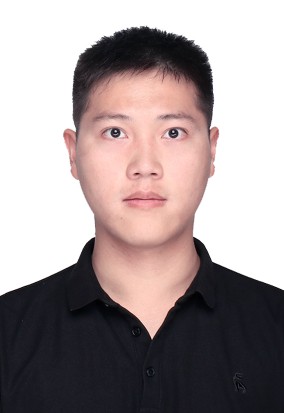}}]
{Haodong Zou}
(Student Member, IEEE) received the B.S. degree in Software Engineering and the M.S. degree in Computer Science and Technology from Anhui University, China, in 2018 and 2021, respectively. He is currently pursuing the Ph.D. degree in Computer Science and Technology from the BNU–HKBU United International College and Hong Kong Baptist University. His main research interests include 5G mobile communication, edge computing and edge intelligence.
\end{IEEEbiography}

\begin{IEEEbiography}
[{\includegraphics[width=1in,height=1.25in,clip,keepaspectratio]{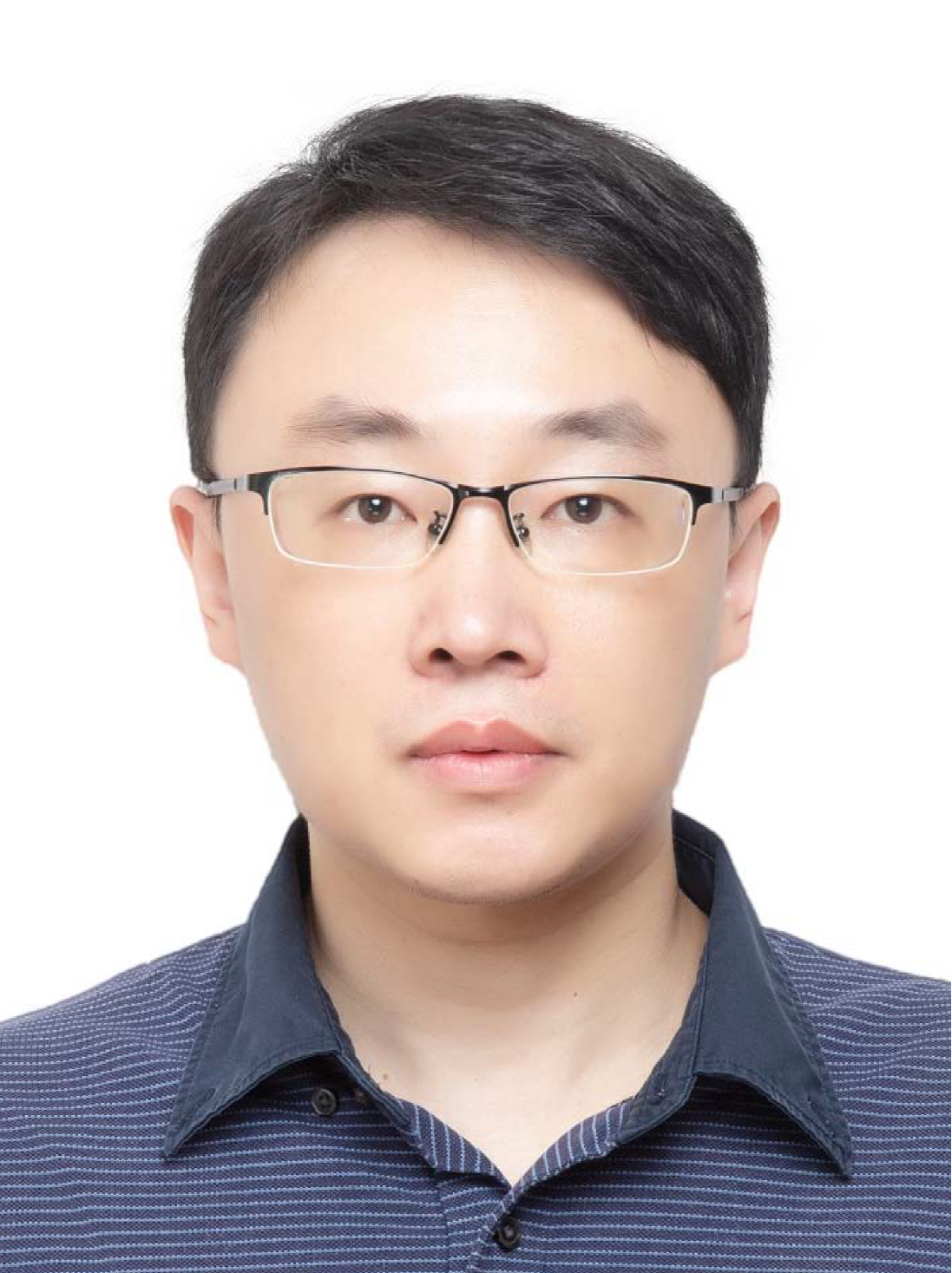}}]
{Wentao Fan}
(Senior Member, IEEE) received the M.Sc. and Ph.D. degrees in electrical and computer engineering from Concordia University, Montreal, QC, Canada, in 2009 and 2014, respectively. He is currently an Associate Professor with the
Department of Computer Science, Beijing Normal University–Hong Kong Baptist University United International College, Zhuhai, Guangdong, China. He has published more than 100 publications in several prestigious peer-reviewed journals and international conferences. His research interests include machine learning, computer vision, and pattern recognition.
\end{IEEEbiography}

\begin{IEEEbiography}
[{\includegraphics[width=1in,height=1.25in,clip,keepaspectratio]{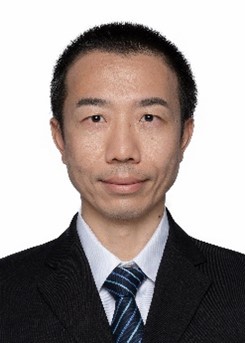}}]
{Tian Wang}
(Senior Member, IEEE) received his BSc and MSc degrees in Computer Science from Central South University in 2004 and 2007, respectively. He received his Ph.D. degree in Computer Science from the City University of Hong Kong in 2011. Currently, he is a professor at the Institute of Artificial Intelligence and Future Networks, Beijing Normal University. His research interests include Internet of Things, edge computing, and mobile computing. He has 30 patents and has published more than 200 papers in high-level journals and conferences. He was a co-recipient of the Best Paper Runner-up Award of IEEE/ACM IWQoS 2024. He has more than 15000 citations, according to Google Scholar. His H-index is 71.
\end{IEEEbiography}

\begin{IEEEbiography}
[{\includegraphics[width=1in,height=1.25in,clip,keepaspectratio]{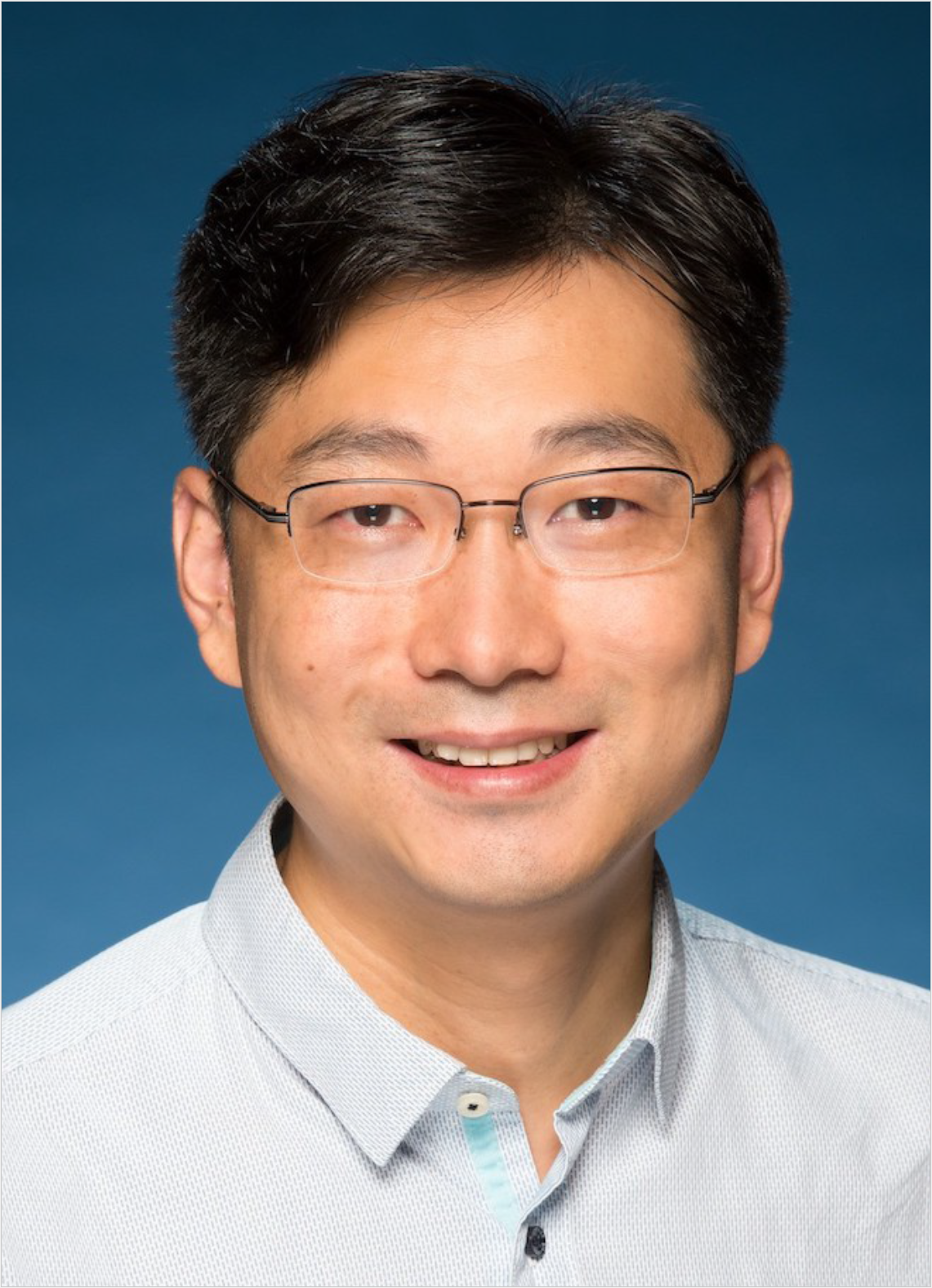}}]
{Xiaowen Chu}
(Fellow, IEEE) received the B.Eng. degree in computer science from Tsinghua University, Beijing, China, in 1999, and the Ph.D. degree in computer science from The Hong Kong University of Science and Technology, Hong Kong, in 2003. He is currently a Full Professor and Head of Data Science and Analytics Thrust at The Hong Kong University of Science and Technology (Guangzhou). His research interests include GPU Computing, Distributed Machine Learning, Cloud Computing, and Wireless Networks. He has won six Best Paper Awards at different international conferences, including IEEE INFOCOM 2021. He has published over 240 research articles at international journals and conference proceedings. He has served as an associate editor or guest editor of IEEE Transactions on Cloud Computing, IEEE Transactions on Network Science and Engineering, IEEE Transactions on Big Data, IEEE IoT Journal, IEEE Network, IEEE Transactions on Industrial Informatics, etc. 
\end{IEEEbiography}

\begin{IEEEbiography}
[{\includegraphics[width=1in,height=1.25in,clip,keepaspectratio]{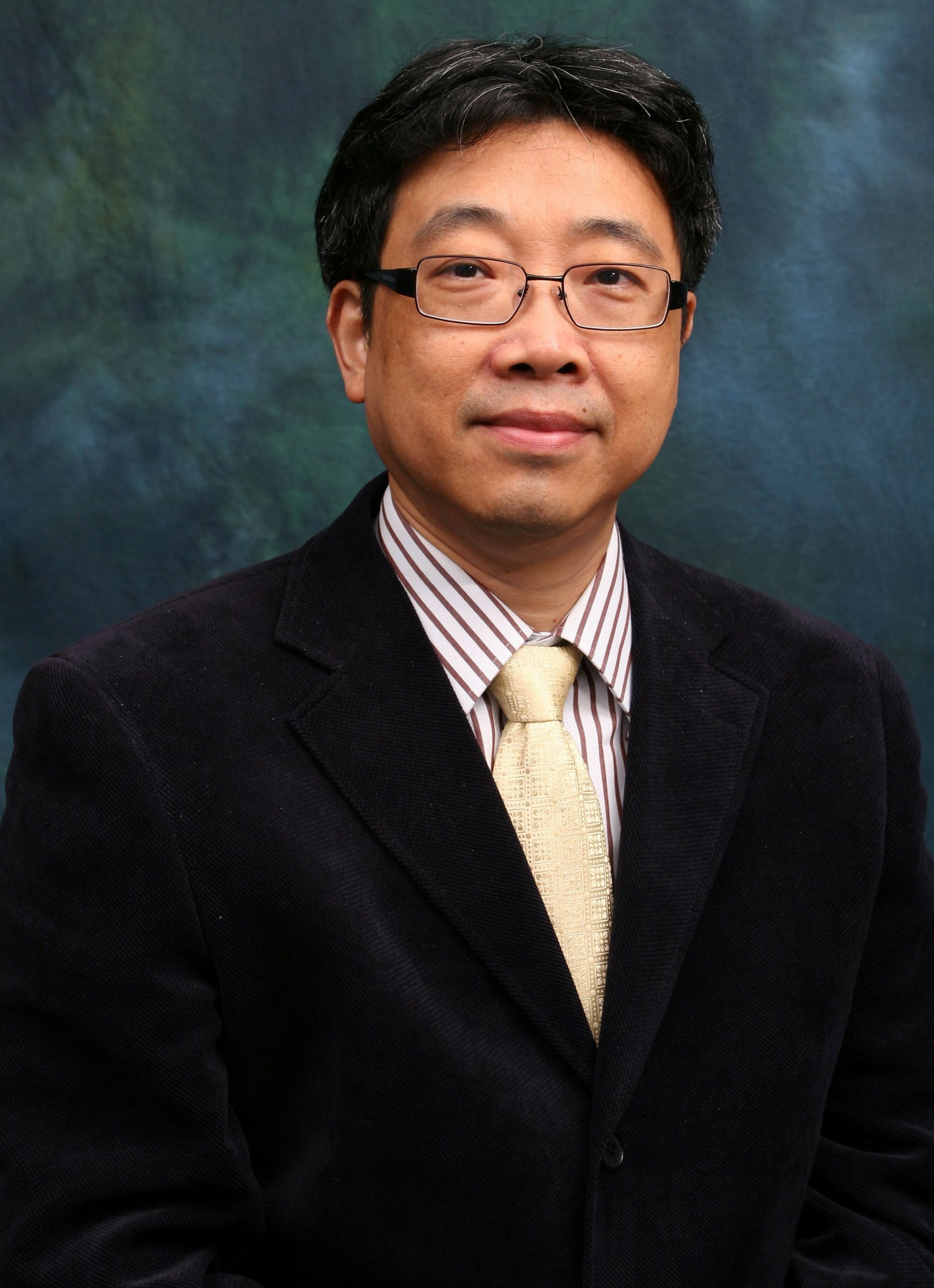}}]
{Jiannong Cao}
(Fellow, IEEE) received the B.Sc. degree in computer science from Nanjing University, China, in 1982, and the M.Sc. and Ph.D. degrees in computer science from Washington State University, USA, in 1986 and 1990, respectively. He is currently the Otto Poon Charitable Foundation Professor in data science and the Chair Professor of distributed and mobile computing with the Department of Computing, Hong Kong Polytechnic
University, Hong Kong. He is also the Director of the Internet and Mobile Computing Lab at the Department and the Associate Director of the University Research Facility in big data analytics. His research interests include parallel and distributed computing, wireless networks and mobile computing, big data and cloud computing, pervasive computing, and fault-tolerant computing. He has coauthored five books in Mobile Computing and Wireless Sensor Networks, co-edited nine books, and published over 600 papers in major international journals and conference proceedings. He is a Distinguished Member of ACM and a Senior Member of the China Computer Federation (CCF).
\end{IEEEbiography}

\end{document}